\documentclass{article} 

\usepackage{natbib}

\usepackage{hyperref}
\usepackage{fullpage}
\usepackage{url}

\usepackage{wrapfig}

\usepackage{algorithm}
\usepackage{algorithmic}

\usepackage{amsmath,amsfonts,amssymb,amsthm}
\usepackage{caption}
\usepackage{subcaption}
\newcommand{\ssymbol}[1]{^{\@fnsymbol{#1}}}
\usepackage{amsthm}
\usepackage[inline,shortlabels]{enumitem}
\usepackage{cleveref}
\usepackage{thmtools,thm-restate}
\usepackage{booktabs}
\usepackage{graphicx}
\theoremstyle{definition}
\newtheorem{definition}{Definition}[section]
\newtheorem{claim}[definition]{Claim}
\newtheorem{fact}[definition]{Fact}

\usepackage{authblk}

\usepackage{amsmath,amsfonts,bm}









\def\eqref#1{equation~\ref{#1}}









\def\1{\bm{1}}










\DeclareMathAlphabet{\mathsfit}{\encodingdefault}{\sfdefault}{m}{sl}
\SetMathAlphabet{\mathsfit}{bold}{\encodingdefault}{\sfdefault}{bx}{n}













\DeclareMathOperator*{\argmax}{arg\,max}

\title{An Axiomatic Approach to Model-Agnostic Concept Explanations}

\author[1]{Zhili Feng\footnote{Contributed equally.}}
\newcommand\CoAuthorMark{\footnotemark[\arabic{footnote}]} 
\author[2]{Michal Moshkovitz\protect\CoAuthorMark}
\author[2]{Dotan Di Castro}
\author[1,2]{J. Zico Kolter}
\affil[1]{Carnegie Mellon University}
\affil[2]{Bosch Center for AI}



%


\begin{document}

\maketitle

\begin{abstract}
Concept explanation is a popular approach for examining how human-interpretable concepts impact the predictions of a model. However, most existing methods for concept explanations are tailored to specific models. To address this issue, this paper focuses on model-agnostic measures. Specifically, we propose an approach to concept explanations that satisfy three natural axioms: linearity, recursivity, and similarity. 
We then establish connections with previous concept explanation methods, offering insight into their varying semantic meanings.
%
Experimentally, we demonstrate the utility of the new method by applying it in different scenarios: for model selection, optimizer selection, and model improvement using a kind of prompt editing for zero-shot vision language models.

\end{abstract}

\section{Introduction}

In today's world, with the expanding influence of machine learning systems on our lives, it is crucial to understand the reasoning behind their decisions. Unfortunately, the current black-box nature of ML models often leaves users puzzled.
In response to this challenge, the field of explainable AI (XAI) has emerged, which aims to develop new types of explanations and methods to find them. One type of explanation that stands out is \emph{concept explanation}, which assesses the importance of a human-understandable concept to a prediction. For instance, in predicting the appearance of a ``zebra'' in an image, the presence of ``stripes'' is a human-understandable concept that may impact the prediction.

Several methods \citep{kim2018interpretability,bai2022concept} have been developed to capture the importance of a concept in relation to a prediction. However, these methods are model-specific and require white-box access, which can be a limitation, for example, in cases where the model is proprietary and cannot be freely accessed for inspection. Moreover, there is often a lack of clarity surrounding the interpretation of these methods for concept explanation, the specific aspects they evaluate, and how they differ from each other.

To address these challenges, we aim to develop an axiomatic approach for determining the importance of a concept in relation to a prediction. We begin by establishing a few axioms, derive measures that are consistent with these axioms, and design an algorithm for estimating the measures. 
Additionally, we examine the connection between our method and previous research in the field. 

The newly proposed framework offers a number of advantages:
\begin{enumerate}
    \item \textbf{Axiomatically-based measures.} 
    Our proposed measures are rooted in three axioms, with two of them having been implicitly assumed in prior studies. This axiomatic approach enables us to develop measures that are firmly established.
     \item \textbf{Model-agnostic.} 
    Our measures are model-agnostic, i.e., no need to know the internal workings of the model to apply the new measures. 
     \item \textbf{Integration of previous research.} We establish a connection between prior work (TCAV \citep{kim2018interpretability} and  completeness-aware \citep{yeh2020completeness} explanations), and our approach. 
     
    \item \textbf{Semantic meaning of concept-explanation methods.} The new framework helps understand the semantic meaning of prior concept explanation methods: TCAV \citep{kim2018interpretability} is associated with necessity, while completeness-aware \citep{yeh2020completeness} with sufficiency. 
    
     \item \textbf{Efficient Implementation.} We design an efficient implementation for estimating the new measures, which is more efficient than previous methods for concept-explanations.
\end{enumerate}

We use the new method we developed for a variety of applications, see Section~\ref{sec:exp_applications} for more detaild: 
\begin{enumerate}
    \item \textbf{Model selection.} 
    Our approach is model-agnostic, allowing it to explain various models and thus facilitate model selection. We use it to make a comparison between logistic regression and random forest, assisting users in understanding different aspects of the learned models in order to enable them to make an informed decision as to which model to select. 
    \item \textbf{Optimizer selection.}
We can use the new method to select between different optimizers. Namely, we apply our method on networks trained using different optimization algorithms: SGD and AdamW. Our method offers insights into the differences between these two optimization techniques, thereby assisting in the selection process.
    \item \textbf{Model improvement via prompt editing.} Through our approach, we are able to improve model performance by mitigating the influence of irrelevant concepts. Specifically, we use our method to detect concepts impacting the prediction but lack relevance to the classification task. We then identify these concepts in the CLIP embedding space. We demonstrate that by editing the CLIP textual embedding of each class and decreasing the representation of the irrelevant concepts, we can significantly enhance model performance.
\end{enumerate}

 Our method relies on having examples annotated with relevant concepts. However, when these annotations are unavailable, we show, in Section~\ref{sec:automatic_concept_labeling}, how to leverage recent developments in image captioning to automatically label concepts in a semantically meaningful manner.

\section{Related Work}\label{sec:related}

\paragraph{Post hoc explanations.}
Post hoc explanations can offer valuable insights into how a model works without compromising the model's accuracy. These explanations provide additional information, such as feature-attributions \citep{lundberg2017unified,ribeiro2016should,ribeiro2018anchors} that evaluate the contribution of each feature to the prediction. In addition to feature-attribution explanations, there are also example-based approaches like counterfactual explanations and influential examples. Counterfactual explanations \citep{wachter2017counterfactual,karimi2020model} highlight what changes to input would have to be made to change the model's output. Influential examples \citep{koh2017understanding,koh2019accuracy} reveal which examples are the most important for a model's decision-making process. Overall, these post hoc explanations can help provide a better understanding of a model's inner workings. Concept explanations, the focus of this paper, are a form of post hoc explanations that focus on determining the impact of concepts on a model's predictions.

\paragraph{Concept-based explanations.}
Different studies have proposed methods to return the influence of concepts for neural network models \citep{kim2018interpretability,bai2022concept,yeh2020completeness}, unlike our approach these methods are not model-agnostic.
 Another line of research \citep{ghorbani2019towards,yeh2020completeness} has sought to learn relevant concepts for a given task. 
This paper proposes a novel axiomatic approach to post hoc concept explanations, which is model-agnostic and thus applicable to a wide range of models. In contrast to post hoc explanations, \citet{koh2020concept,espinosa2022concept, yuksekgonul2022post,chen2019looks, bontempelli2022concept} propose a different approach to understanding concepts, using concept bottleneck models, where concepts are part of the model's hidden representation.

\textbf{Model-agnostic explanations.} 
    Model-agnostic explanations are methods that produce explanations that do not require any knowledge about the internal workings of the model. The lack of access to the inner workings of a model can be particularly beneficial when the model is confidential and cannot be revealed. 
    Numerous model-agnostic techniques are available for feature attribution (e.g., \cite{lundberg2017unified,ribeiro2016should}). However, when it comes to concept explanations, there are fewer model-agnostic approaches. Our approach is model-agnostic, distinguishing it from other techniques, as discussed in
    \citet{kim2018interpretability,bai2022concept} which require access to the model's gradients or hidden layers.

\paragraph{Axiomatic approach to feature attribution explanations.} 
Feature attribution explanation assigns a value to each feature to represent its impact on the prediction. This explanation is a type of concept explanation where the concepts are restricted to the feature values (for neural networks, the features are usually the internal representations of the inputs). To ensure a consistent and reliable explanation method, many researchers have suggested using an axiomatic approach to this type of explanation \citep{lundberg2017unified,sundararajan2017axiomatic}. For example, the implementation invariance axiom proposed by \citet{sundararajan2017axiomatic} is model-agnostic and requires identical attributions for two functionally equivalent models. 

\paragraph{Understanding the explanations.} Concept explanations generalize the notion of feature attribution, which solely assigns significance to features in the prediction. However, even within this more narrow explanation category, 
    different attribution methods \citep{lundberg2017unified,ribeiro2016should} produce varying and sometimes inconsistent results \cite{krishna2022disagreement}. While one approach to understand these differences is to show they have a similar mathematical structure \citep{han2022explanation}, a different approach is suggested in this paper. Specifically, we propose to understand the semantic differences when comparing different concept-explanations methods. We demonstrate that TCAV \citep{kim2018interpretability} is associated with the necessity of the concept to the prediction, while completeness-aware \citep{yeh2020completeness} is associated with the sufficiency.


\section{The Axiomatic Approach}\label{sec:axiomatic_approach}
\paragraph{Notation.} 
We use $\mathcal{X}$ to represent the set of input examples. The model to be explained is denoted by $h: \mathcal{X} \rightarrow \{-1,+1\}$, which takes inputs from $\mathcal{X}$ and outputs either $+1$ or $-1$ to indicate whether the prediction is true or false, respectively. The concept we are interested in measuring is denoted by  $c:\mathcal{X} \rightarrow [-1,+1]$ (or $c:\mathcal{X} \rightarrow \{-1,+1\}$ in the discrete case), which maps inputs from $\mathcal{X}$ to values between $-1$ and $+1$, where $c(x) = 1$ indicates that the concept holds and $c(x)=-1$ indicates that it does not. Note that a concept is an intrinsic property of an example and is entirely unrelated  to any prediction made by a model. While for simplicity we introduce the framework with the binary classification problem, it is straightforward to extend to the $k$-class case by treating it as $k$ one-vs-all classification problems.
The probability distribution over examples is denoted by $p:\mathcal{X}\rightarrow[0,1]$, which maps inputs from $\mathcal{X}$ to a probability, $\sum_x p(x)=1$. The proofs are deferred to the appendix.
\paragraph{Problem formulation.}
Our goal is to identify a measure, denoted as $M$, which quantifies the influence of a concept $c$ on the predictions of an opaque model $h$; for example, consider $h$ as a predictor of the ``zebra'' class and $c$ is the concept of ``stripes''. This allows one to gain a deeper understanding of the model $h$ by making use of a concept that is easily understandable to humans. Towards achieving this goal, the first step is to examine the inputs of $M$. Specifically, the measure $M$ is a function maps the following parameters to $\mathbb{R}$: (i) the model $h$, (ii) the concept  $c$, and (iii) the a priori probability $p(\cdot)$ over the examples. 


\subsection{Axiom 1: Linearity with respect to examples} 
The first axiom states that when measuring the impact of a concept, the impact on each example, which is determined solely by the example's properties, should be summed.
For instance, when evaluating the impact of the concept ``stripes'' on predicting a ``zebra'', one should separately assess how impactful ``stripes'' are for each example $x$ and then sum everything together. The knowledge of how much stripes are impacting another example $x'$ does not influence how stripes are impacting example $x$.
Mathematically, this axiom implies that $M$ can be expressed as
\[
M(h,c,p)=\sum_{x\in\mathcal{X}} m(h(x), c(x), p(x)), 
\]
for some local impact function $m:\{-1,+1\}\times[-1, 1]\times[0,1]\to\mathbb{R}$.

The linearity axiom is presumed in numerous other situations, such as the standard evaluation of a hypothesis using an \emph{additive} loss function (e.g., Chapter $2$ in \citep{shalev2014understanding}). Additionally, previous techniques for concept explanations, such as TCAV, also made this assumption. For further information and comparison between various measures for concept explanations and our work, see \Cref{sec:comparison}.

\subsection{Axiom 2: Recursivity}
The second axiom focuses on the local measure $m$ and investigates how it relates the probability example, $p(x)$, to the properties of example $h(x)$ and $c(x)$.  The axiom specifically aims to investigate how $m$ changes with respect to $p(x)$.  This axiom draws parallels to the entropy recursivity axiom mentioned in \citep{csiszar2008axiomatic}.

The recursivity axiom requires that when splitting the probability $p(x)$ of an example between two new examples with the same $h(x)$ and $c(x)$ values, the value of $M$ remains unchanged. In other words, the axiom requires that replacing an example $x$ with two examples that have probabilities $p_1$ and $p_2$ such that $p_1+p_2 = p(x)$ and the same $h(x)$ and $c(x)$ values, keeping all other examples unchanged, does not alter the value of $M$. We can write $M$ in the following way. 


\begin{restatable}{claim}{AxiomTwoClm}
\label{clm:axiom2}
If $M$ satisfies the first two axioms and the range of $p$ is the rationals, then $M$ can be written  as 
\[ 
M(h, c, p) = \sum_x p(x) \cdot s(h(x), c(x)),
\]
for some function $s:\{-1,+1\}\times [-1,+1]\to\mathbb{R}.$
\end{restatable}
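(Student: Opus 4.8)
The plan is to combine the two axioms and then recognize a Cauchy-type functional equation. By Axiom~1, $M$ already has the form $M(h,c,p) = \sum_{x} m(h(x), c(x), p(x))$, so it suffices to pin down the dependence of the local function $m$ on its third argument. First I would fix a pair of values $(a,b) \in \{-1,+1\}\times[-1,+1]$ and isolate a single example $x$ with $h(x) = a$ and $c(x) = b$. Applying the recursivity axiom to this example --- splitting its probability mass $p(x) = p$ into two fresh examples carrying the same labels $a,b$ and probabilities $p_1, p_2$ with $p_1 + p_2 = p$ --- and using linearity to compare the sums before and after the split, every other term cancels and I obtain the identity $m(a, b, p_1 + p_2) = m(a,b,p_1) + m(a,b,p_2)$, valid for all nonnegative rationals with $p_1 + p_2 \le 1$.

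For fixed $(a,b)$ this is exactly Cauchy's additive functional equation for the single-variable map $q \mapsto m(a,b,q)$ on the rationals in $[0,1]$. The second step is to solve it. Setting $p_1 = p_2 = 0$ gives $m(a,b,0) = 0$; iterating additivity over $n$ equal pieces gives $m(a,b,1) = n\, m(a,b,1/n)$, hence $m(a,b,1/n) = m(a,b,1)/n$; and summing $m$ copies of $1/n$ (all partial sums staying in $[0,1]$, so each application of the axiom is legitimate) yields $m(a,b,m/n) = (m/n)\, m(a,b,1)$. Thus $m(a,b,q) = q \cdot m(a,b,1)$ for every rational $q \in [0,1]$. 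Here the hypothesis that the range of $p$ is rational is essential: it lets me conclude linearity purely from additivity, sidestepping the pathological (non-measurable) solutions that Cauchy's equation admits over the reals in the absence of a regularity assumption.

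Finally I would set $s(a,b) := m(a,b,1)$, which defines a function $s:\{-1,+1\}\times[-1,+1]\to\R$, and substitute back into the linearity form to obtain $M(h,c,p) = \sum_x m(h(x),c(x),p(x)) = \sum_x p(x)\, s(h(x),c(x))$, as claimed.

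The main obstacle I anticipate is the bookkeeping in the first step rather than the functional-equation solution. I must confirm that the recursivity axiom, phrased as replacing one example by two, really does reduce to additivity of $m$ in its last argument with all other summands left untouched, and that the intermediate probability values invoked in the induction of the second step never leave $[0,1]$. Once that reduction is secured, solving Cauchy's equation over the rationals is routine.
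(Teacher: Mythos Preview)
Your proposal is correct and follows essentially the same approach as the paper: use Axiom~1 to reduce to the local function $m$, apply recursivity to cancel all but one summand and obtain the Cauchy equation $m(a,b,p_1+p_2)=m(a,b,p_1)+m(a,b,p_2)$, and then solve this over the rationals to conclude $m(a,b,q)=q\cdot s(a,b)$. If anything, your write-up is slightly more careful than the paper's in tracking that all intermediate probabilities stay in $[0,1]$ and in explicitly identifying $s(a,b)=m(a,b,1)$.
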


This axiom guarantees linearity in probability. Similar to the earlier axiom, prior methods for assessing concept explanations, including TCAV, have also implicitly relied on this assumption.

\subsection{Axiom 3: Similarity}
In the third axiom, we examine the term $s(h(x),c(x))$, which focuses on a single example with a certain probability (the exact probability is irrelevant for $s$). It aims to determine the contribution of $h(x)$ and $c(x)$ to the overall value of $M$. The similarity axiom asserts that for every example $x$, when $c(x)$ and $h(x)$ are close, $M$ will be high, and when they are far, $M$ will be low. To satisfy this axiom, any similarity function can be utilized.
One natural choice is to take $s(h(x), c(x)) = c(x)\cdot h(x)$, which treat the class and concept similarly. 
As an illustration, consider evaluating the influence of "stripes" on a "zebra" predictor. When stripes are prominently featured in an instance (i.e., a high value of $c(x)$), it can influence the predictor only if a zebra is also present (i.e., $h(x)=1$).

\subsection{The new measures and estimation algorithm}
We can explore different instantiations of $s$, either focus on a specific class or concept, or treat them symmetrically, resulting in the following measures.   
\begin{definition} Fix predictor $h: \mathcal{X} \rightarrow \{-1,+1\}$, concept  $c:\mathcal{X} \rightarrow [-1,+1]$, and probability $p$ over the examples, 
    \begin{itemize}
    \item (symmetric measure) For $s(h(x), c(x)) = c(x)\cdot h(x)$, \[M(h, c, p) = \mathbb{E}_{x\sim p}[h(x) c(x)].\]
    \item (class-conditioned measure) For  $s(h(x), c(x)) = c(x)$ if $h(x)=1$ and otherwise undefined, after normalization  \[
M(h, c, p) = \mathbb{E}_{x\sim p}[c(x)| h(x) = 1].
\]
     \item (concept-conditioned measure) For $s(h(x), c(x)) = h(x)$ if $c(x)$ is high (i.e., $c(x)\geq\theta$ in the continuous case and $c(x)=1$ in the discrete case) and otherwise undefined, after normalization 
     \[
M_\theta(h, c, p) = \mathbb{E}_{x\sim p}[h(x)| c(x) \geq \theta].
\]
\end{itemize}
\end{definition}

Intuitively, class-conditioned measure quantifies how often the concept $c$ occurs in the class, e.g., if $c$ is always present when $h(x)=1$, then the value of $M$ will be high; and concept-conditioned measure quantifies the probability of the class when the concept holds (i.e., exceeding the threshold), e.g., if whenever $c$ is present enough (i.e., $c(x)\geq\theta$), the class is 1, then $M$ will be high. Putting differently, 
\begin{center}
\textit{class-conditioned measures \emph{necessity} of $c$ to $h$ and concept-conditioned the \emph{sufficiency}}
\end{center}


We can estimate the symmetric measure using the following simple algorithm (and in a similar manner for all the three measures).  
\begin{algorithm}[H]
\caption{Algorithm for estimating the symmetric measure}
\begin{algorithmic}[1] 
\STATE \textbf{Input:}  $n$ examples $x_1, x_2, \ldots, x_n$ sampled from distribution $p$ and labeled by $h$ and $c$ 
\RETURN $\frac{1}{n}\sum_{i=1}^n h(x_i)c(x_i)$
\end{algorithmic}
\end{algorithm}

\section{TCAV and Completeness-aware Explanations within the Axiomatic Framework}\label{sec:comparison}

This section provides formal proof that previous research conducted on concept explanations can be viewed as a part of our new axiomatic approach, given certain assumptions. Specifically, we focus on two widely used methods: TCAV \citep{kim2018interpretability} and Completeness-Aware \citep{yeh2020completeness} Explanations.
A benefit of our approach is that it enables a faster implementation of these previous methods, as compared to their original implementation which required learning a model. 
%

\paragraph{Setting.} 
In prior research, ground truth labeling $\{(x_i, y_i)\}_i$ was utilized to assess the quality of concept explanations. However, as our objective is to evaluate the influence of explanations on the model, it is necessary for the evaluation measure to be independent of ground truth. For instance, when examining the influence of a particular profession on a loan decision, the actual loan repayment outcome becomes insignificant in understanding the impact of the profession \emph{on the model}. Therefore, when comparing to previous research, we define $y_i=h(x_i)$ in order to eliminate the dependence on the ground truth.

\subsection{Completeness-aware Concept-based Explanation}\label{sub_sec:comapre_complete_aware}
\citet{yeh2020completeness} propose a completeness score to evaluate the quality of a set of concepts for a prediction task. The primary term, i.e., the only one that depends on the concept, in the completeness score is the following (see Appendix~\ref{apx:proof_comparison} for the full completeness score), 
\begin{equation}
S_{CACBE}\triangleq\sup_g \Pr_{x,y}[y = \argmax_{y'} h_{y'}(g(c(x)))],
\end{equation}
where $h_{y'}$\footnote{Note that we have overloaded the notation to allow the input of 'h' to be a representation of an example, rather than just the example itself.} is the predicted probability of label $y'$, and $g$ is a function that maps the set of concepts to the feature space.  In our notation, $\argmax_{y'}h_{y'}$ is simply $h$. 
The score evaluates the quality of the explanation by analyzing the degree to which it is sufficient for decoding $h(x)$ with the help of $g$.

We show that the completeness score can be written as a simple aggregation of the concept-conditioned measure for the different classes, when $\mathcal{C}(x)=c(x)$, and the concepts and predictions are binary. Namely, 
\begin{restatable}{theorem}{CompareAware}
\label{clm:compare_aware}
For concept $c:\mathcal{X}\rightarrow\{-1,+1\}$ and predictor $h:\mathcal{X}\rightarrow\{-1,+1\}$, the following holds
\begin{equation}\label{clm:aware_equiv}
    S_{CACBE}=\frac12+\frac12\sum_{\ell=1,-1}\rvert\mathbb{E}_{x}[h(x)|c(x)=\ell]\lvert\cdot \Pr(c(x)=\ell).
\end{equation}
\end{restatable}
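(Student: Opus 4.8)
The plan is to unwind the definition of $S_{CACBE}$ under the conventions of this section, reduce the supremum over $g$ to a finite optimization over binary functions, and then evaluate that optimization by a pointwise maximization that produces the absolute-value terms.

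First I would substitute the setting into the definition. Since $y = h(x)$ and $\argmax_{y'} h_{y'}$ coincides with the predictor $h$ acting on representations, the score becomes
\[
S_{CACBE} = \sup_g \Pr_x[\, h(x) = h(g(c(x))) \,].
\]
Because $c$ takes only the two values in $\{-1,+1\}$, the map $x \mapsto g(c(x))$ depends on $x$ only through $c(x)$, so the composition $f \triangleq h \circ g$ is effectively an arbitrary function $f:\{-1,+1\}\to\{-1,+1\}$. Taking the supremum over $g$ therefore amounts to maximizing over the four possible such $f$, provided $g$ is flexible enough that both predictor labels are attainable through $h \circ g$; I would state this realizability (surjectivity of $h$ onto $\{-1,+1\}$) assumption explicitly.

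Next I would condition on the concept value, writing
\[
\Pr_x[h(x) = f(c(x))] = \sum_{\ell \in \{-1,+1\}} \Pr(c(x)=\ell)\cdot \Pr_x[\, h(x) = f(\ell) \mid c(x)=\ell \,].
\]
Since $f(1)$ and $f(-1)$ may be chosen independently, the supremum distributes across the sum and reduces to maximizing $\Pr[h(x)=v \mid c(x)=\ell]$ over $v \in \{-1,+1\}$ separately for each $\ell$. The key elementary identity is then that for a $\{-1,+1\}$-valued variable with $q \triangleq \Pr[h(x)=1\mid c(x)=\ell]$ we have $\max(q, 1-q) = \tfrac12 + \tfrac12|2q-1|$, while $\mathbb{E}_x[h(x)\mid c(x)=\ell] = 2q-1$, so $|2q-1| = |\mathbb{E}_x[h(x)\mid c(x)=\ell]|$. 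Substituting and using $\sum_\ell \Pr(c(x)=\ell)=1$ collapses the constant halves into the leading $\tfrac12$ and yields exactly the claimed expression.

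I expect the only genuine subtlety to be the reduction from the supremum over arbitrary $g$ to a supremum over the four binary maps: one must argue that $h \circ g$ can realize each target function $f$, which requires both labels to be attainable by feeding some representation through $g$. The remaining steps are routine conditional-probability bookkeeping together with the one-line binary-variable identity, so I would keep them brief.
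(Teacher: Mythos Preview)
Your proposal is correct and follows essentially the same route as the paper: reduce the supremum over $g$ to independently choosing the best label $h(g(\ell))$ for each concept value $\ell\in\{-1,+1\}$, then convert the resulting $\max$ of conditional probabilities into an absolute value of the conditional expectation. The paper carries out the last step via $\Pr(h(x)=\pm1\wedge c(x)=\ell)=\mathbb{E}[(\pm h(x)+1)/2\mid c(x)=\ell]\Pr(c(x)=\ell)$ rather than your identity $\max(q,1-q)=\tfrac12+\tfrac12|2q-1|$, but these are the same computation; your surjectivity caveat is harmless since the degenerate constant-$h$ case satisfies the formula trivially.
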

In other words, \citet{yeh2020completeness} implicity applies Axiom 3 by treating each class independently and summing them all together to create the final similarity measure.

\paragraph{Efficient implementation.}
In order to compute the completeness score, one must first learn $g$, which can be a time-intensive process. In contrast, our approach eliminates the need for this learning phase. As a result, our measures can be computed more efficiently than those in the work of \citet{yeh2020completeness}.

\subsection{Testing with Concept Activation Vectors (TCAV)} \label{sub_sec:comapre_tcav}
The problem of evaluating the impact of concepts was first introduced in the work of \citet{kim2018interpretability}. The approach involves using (i) a set of examples that are labeled with the concept of interest, and (ii) the gradient of the model's prediction for each example. They assumed  the concept is linearly separable, by a unit vector $v$, in the representation domain of the examples. Mathematically, \citet{kim2018interpretability} quantifies the concept $c$ for example $x$ as $c(x) = g(x) \cdot v$, for some embedding function $g(\cdot)$.
To ensure that $c(x)\in[-1,+1]$, we make the assumption that unit norm embeddings are used. 
As a side-note, subsequent research \citep{bai2022concept} relaxed the linearity assumption and employed the concept's gradient, rather than utilizing $v$. 

The TCAV definition involves computing the ratio of examples in a specific class, $X_k$, 
that has a positive score. The score is computed as the dot product between the gradient of the model's prediction with respect to the representation $g(x)$ and the concept unit vector $v$. Quantitatively, the TCAV definition is 
\[
\textrm{TCAV} := \frac{\{|x\in X_k : S(x) > 0\}|}{|X_k|}, 
\]
where  $S(x) = \nabla f(g(x))\cdot v$ and $f$ is some function.
Our focus is on a continuous version of TCAV:
$
\textrm{TCAV$_{con}$} := \frac{\sum_{x\in X_k } S(x)}{|X_k|}. 
$
The work of \citet{kim2018interpretability} investigates the impact of a concept on various layers of a neural network by utilizing the representations in those layers as $g(x)$ and the corresponding subsequent layers as $f(\cdot)$. In this work we focus on the last layer. Thus, treat the last layer as $g(\cdot)$, i.e., we focus on models 
$
h(x)\triangleq sign(f(g(x))) = sign(w_h \cdot g(x) - \theta_h),   
$
where $w_h$ is a unit norm.
Thus, we can infer that $\nabla f(g(x))=w_h$ and $S(x) = w_h\cdot v.$ 
Our next theorem proves that $\textrm{TCAV$_{con}$}$ is similar to class-conditioned measure.

\begin{restatable}{theorem}{CompareTCAV}
\label{clm:compare_tcav}
Fix a concept $c(x)=g(x)\cdot v$ for some $v$ and $g$, and a predictor $h(x)=sign(w_h \cdot g(x) - \theta_h)$.
For any $\epsilon, \delta\in(0,1)$, if there are at least $2\ln(1/\delta)/\epsilon^2$ concept-labeled examples to calculate TCAV', and $\theta_h\geq 1-\epsilon^2/8$, then with probability at least $1-\delta$ 
  \[
  \lvert\mathbb{E}_{x\sim p}[c(x)| h(x)] - \textrm{TCAV$_{con}$} \rvert < \epsilon.
  \]
\end{restatable}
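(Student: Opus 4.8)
The plan is to first observe that, in the last-layer setting adopted here, the per-example TCAV score collapses to a single constant. Since $\nabla f(g(x)) = w_h$ for all $x$, we have $S(x) = w_h \cdot v$ independent of $x$, and therefore $\mathrm{TCAV}_{con} = \frac{1}{|X_k|}\sum_{x\in X_k} S(x) = w_h\cdot v$ regardless of which examples populate $X_k$. This reduces the entire theorem to showing that the class-conditioned average of $c(x)=g(x)\cdot v$ over positive examples ($h(x)=1$) is close to the fixed target $w_h\cdot v$. I would state this reduction at the outset and then treat $\mathrm{TCAV}_{con}=w_h\cdot v$ as a constant.

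The heart of the argument is geometric. Conditioning on $h(x)=1$ means $w_h\cdot g(x)\geq \theta_h$, and since both $w_h$ and $g(x)$ are unit vectors,
\[
\|g(x)-w_h\|^2 = 2 - 2\,w_h\cdot g(x) \leq 2 - 2\theta_h \leq 2\cdot\tfrac{\epsilon^2}{8} = \tfrac{\epsilon^2}{4},
\]
using the hypothesis $\theta_h \geq 1-\epsilon^2/8$. Hence $\|g(x)-w_h\|\leq \epsilon/2$, so every positive example has its embedding inside a small spherical cap around $w_h$. Applying Cauchy--Schwarz with $\|v\|=1$ gives
\[
|c(x) - \mathrm{TCAV}_{con}| = |(g(x)-w_h)\cdot v| \leq \|g(x)-w_h\| \leq \tfrac{\epsilon}{2}
\]
for each such $x$, and averaging over the conditional distribution yields the deterministic bound $|\mathbb{E}_{x\sim p}[c(x)\mid h(x)=1] - \mathrm{TCAV}_{con}|\leq \epsilon/2$.

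To account for the fact that the conditional mean is estimated from the $n\geq 2\ln(1/\delta)/\epsilon^2$ concept-labeled examples, I would then add a standard concentration step: by the geometric bound the relevant values $c(x)$ lie in the interval $[\mathrm{TCAV}_{con}-\epsilon/2,\ \mathrm{TCAV}_{con}+\epsilon/2]$, so a Hoeffding bound shows the empirical conditional mean deviates from its population value by at most $\epsilon/2$ with probability at least $1-\delta$ once $n$ meets the stated count. A triangle inequality combining the $\epsilon/2$ sampling deviation with the $\epsilon/2$ geometric deviation then delivers the final error $<\epsilon$ with probability $1-\delta$.

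The main obstacle is the geometric step, specifically translating the threshold condition $\theta_h\approx 1$ into the spherical-cap estimate via the unit-norm assumptions on $g(x)$, $w_h$, and $v$; this is where the real content lies, and it is what forces the precise constant $\epsilon^2/8$ in the hypothesis. The concentration part is routine, but I would be careful to split the error budget consistently (half to geometry, half to sampling) so that the constant in the $\theta_h$ condition and the $2\ln(1/\delta)/\epsilon^2$ sample count line up to give exactly $\epsilon$.
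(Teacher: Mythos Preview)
Your proof is essentially the paper's: the same Cauchy--Schwarz spherical-cap estimate $|(g(x)-w_h)\cdot v|\le\|g(x)-w_h\|=\sqrt{2(1-g(x)\cdot w_h)}\le\epsilon/2$ from the threshold hypothesis, combined with Hoeffding and a triangle inequality. The only difference is ordering---the paper invokes Hoeffding first to pass from $\mathbb{E}[c(x)\mid h(x)=1]$ to the empirical average $\tfrac{1}{|X_k|}\sum_{x\in X_k}c(x)$ and then applies the geometric bound term-by-term on that sample, whereas you average the pointwise bound over the conditional population directly; your route in fact shows that the deterministic $\epsilon/2$ estimate already proves the theorem as stated, so your concluding Hoeffding paragraph (and the sample-size hypothesis) is not actually needed.
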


The theorem teaches us that TCAV measures the necessity of a concept for a prediction, which differs from the approach in the work of \citet{yeh2020completeness} that measures the sufficiency of the concept.


\paragraph{Efficient implementation.}
Both TCAV and its follow-up work \citep{bai2022concept} achieve their objective by learning the concept. TCAV learns the vector $v$ and the follow-up work learns the concept to find its gradient. In contrast to these techniques, our approach does not require learning the concept. As a result, calculating our measures is more efficient than TCAV and its follow-up work   \citep{kim2018interpretability,bai2022concept}.

\section{Experiments}\label{sec:experiments}
In this section, we provide several experiments to demonstrate the validity and applicability of our approach. In \Cref{sec:exp_validate_semantics} we validate that our measures capture the semantic relationship between the predictor and the concept. In \Cref{sec:exp_applications} we explore different applications of our method. 
Finally, in \Cref{sec:automatic_concept_labeling} we show that concepts can be automatically labeled by an image captioner.

\subsection{Necessary and Sufficient Explanations}\label{sec:exp_validate_semantics}

As a proof of concept, this section validates the semantic meaning of the measures through two scenarios. In the first scenario $h(x)$ predicts the fine-grained Felidae classes (e.g., Persian cat or cheetah). When $c(x)$ is the ``Felidae'' concept, the concept is necessary, and we expect the class-conditioned  measure to be high. When $c(x)$ is the irrelevant ``wolf'' concept we expect the measure to be low.
In the second scenario, when $h(x)$ predicts ``Felidae'' and $c(x)$ is the fine-grained Felidae concepts, the concept is sufficient for the predictor, and thus we expect the concept-conditioned measure to be high. If $h(x)$ predicts ``wolf'', the concepts are irrelevant and we expect the measure to be low.

Empirically, we show in Figure~\ref{fig:three graphs} that a pretrained ResNet-18 \citep{he2016deep} on ImageNet \citep{deng2009imagenet} exhibits this expected behavior. Specifically, among the $1000$ classes in ImageNet, we consider the classes under the ``Felidae'' hierarchy, these include class ids ranging from $281$ to $293$. When predicting the more coarse class (i.e. when we try to predict whether the image contains a Felidae or not, rather than recognizing the fine-grained type of Felidae), for simplicity we set $h(x)=1$ whenever the ResNet outputs any index in $\{281,\ldots, 293\}$.

\begin{figure}
     \centering
     \begin{subfigure}[b]{0.48\textwidth}
         \centering
         \includegraphics[width=\textwidth]{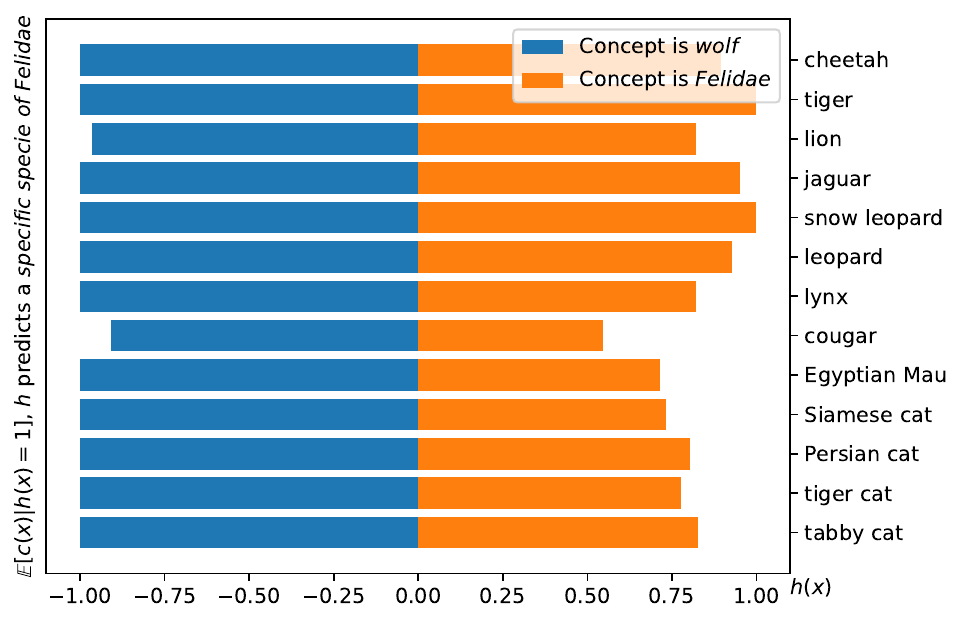}
     \end{subfigure}
     \hfill
     \begin{subfigure}[b]{0.48\textwidth}
         \centering
         \includegraphics[width=\textwidth]{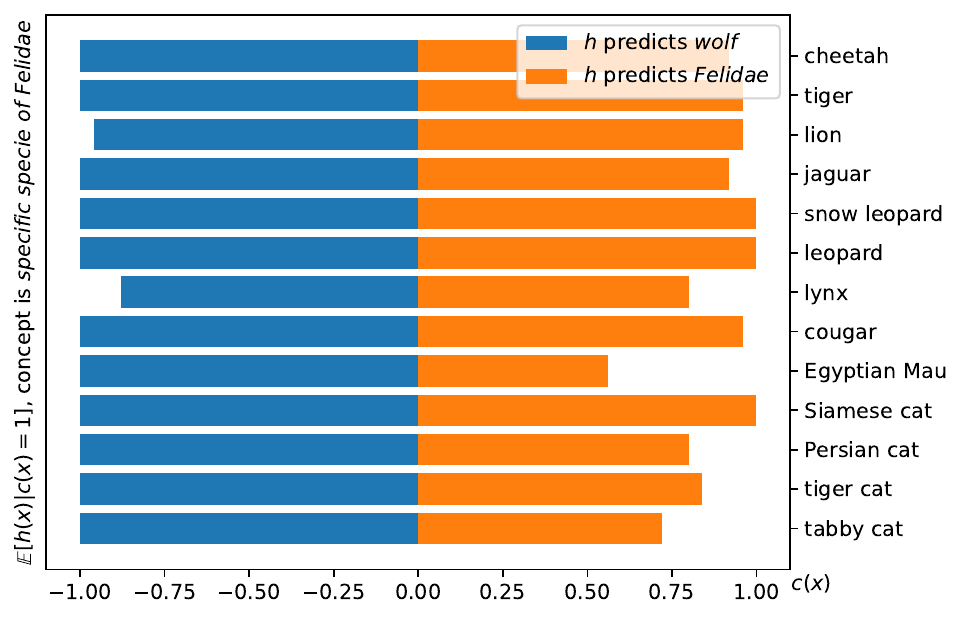}
     \end{subfigure}
    \caption{Necessary and sufficient concepts under our proposed measures. \textbf{Left}: $\mathbb{E}[c(x)|h(x)=1]$, where $h(x)$ predicts the fine-grained Felidae classes (e.g. Persian cat or cheetah), $c(x)$ is the ``Felidae'' or ``wolf'' concept. \textbf{Right}: $\mathbb{E}[h(x)|c(x)=1]$ where $h(x)=1$ means the image is a wolf or Felidae, $c(x)$ is the fine-grained Felidae concepts.}
    \label{fig:three graphs}
\end{figure}

\subsection{Applications}\label{sec:exp_applications}
In this section, we explore three different experiments demonstrating the applicability of the newly devised measures.
We conduct experiments on the a-Pascal dataset \citep{farhadi2009describing}. This dataset contains 6340 training images and 6355 testing images, distributed unevenly over 20 classes. Each image is associated with a concept vector in $\{0, 1\}^{64}$, with concepts like ``wing'' and ``shiny''. The true concepts are labeled by human experts, and in \Cref{sec:automatic_concept_labeling} we show that current image captioners can achieve reliable performance in labeling concepts as well. Unless otherwise specified, we use the human-labeled concepts in the experiments.


\subsubsection{Logistic regression versus random forest explanations}
A key advantage of our approach is its ability to operate with any model, allowing us to apply our measures to both logistic regression (LR) and random forest (RF) models. This sets our approach apart from previous research that primarily focused on neural networks.
In this section, we utilize these measures to compare LR and RF models.

We trained the LR classifier with a $\ell_2$ regularizer. The RF classifier has $100$ trees and each has a maximum depth of $15$. Both models are trained using \citet{scikit-learn}, hyperparameters are set to default unless stated otherwise. Both models achieve $87\%$ accuracy. We plot the model explanation with $\mathbb{E}[h(x)|c(x)=1]$ and $\mathbb{E}[c(x)|h(x)=1]$, see Figure~\ref{fig:lr_vs_rf}. As a comparison, we also draw $\mathbb{E}[c(x)|y=1]$ and $\mathbb{E}[y|c(x)=1]$, labeled as ``ground truth'' in the figures.

Our approach provides valuable insights into the dissimilarities of the models, despite LR and RF achieving almost identical accuracy.
Specifically, when predicting the cat class, RF weights more on the concepts ``tail'', ``torso'', and ``leg'', while LR puts more weight on ``eye''. When predicting the chair class, RF is more sensitive to the concept ``leather'' compared to LR, which is not aligned with the true data distribution.
These insights can be valuable in various scenarios, such as identifying spurious correlations and providing guidance during model selection. For instance, if a concept exhibits a high measure but lacks relevance, it can indicate the presence of spurious correlations. When faced with the choice between several models of equal quality, opting for the one that utilizes a greater number of relevant concepts can be a wise choice.

\begin{figure}
     \centering
     \begin{subfigure}[b]{0.49\textwidth}
         \centering
         \includegraphics[width=\textwidth]{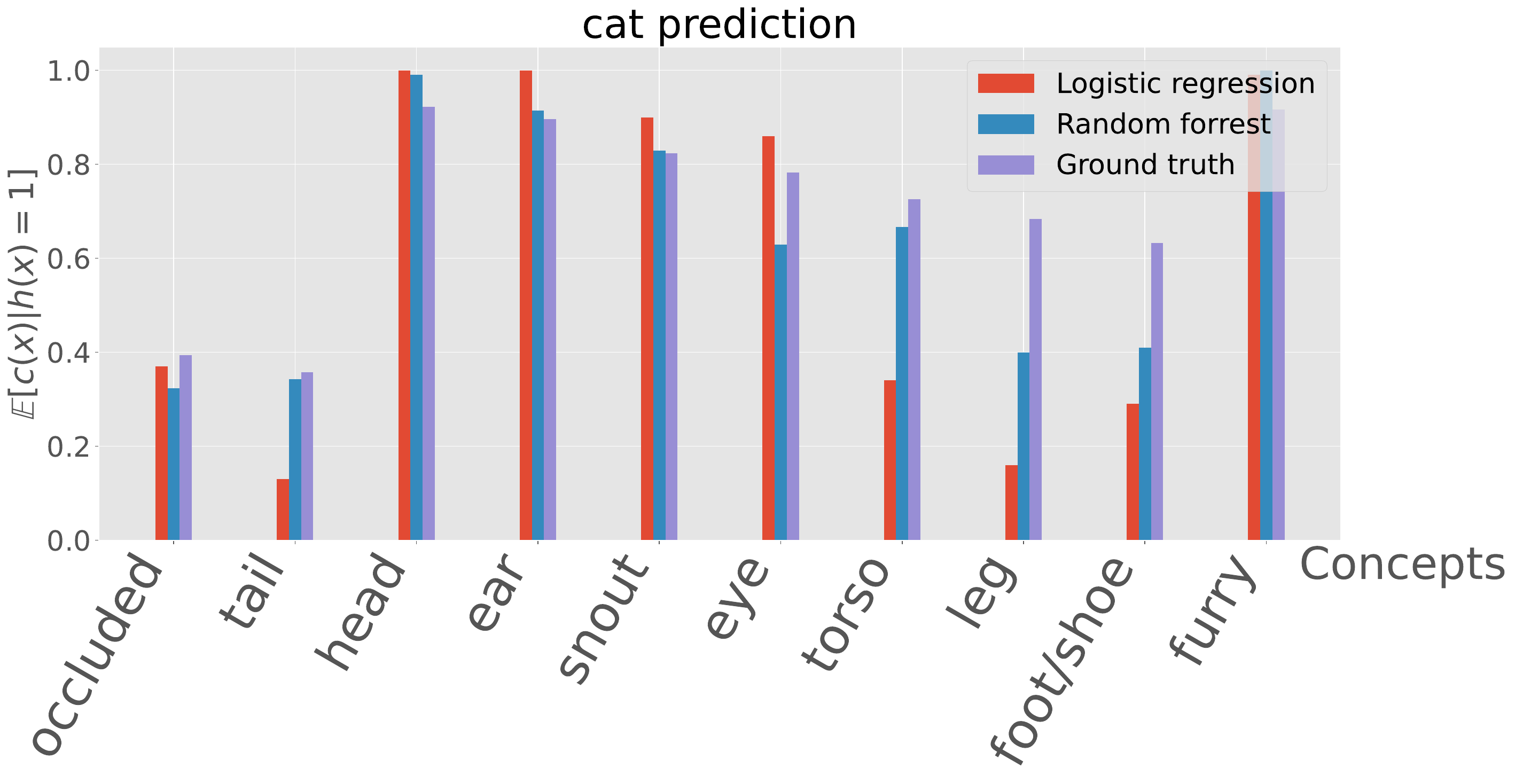}
     \end{subfigure}
     \hfill
     \begin{subfigure}[b]{0.49\textwidth}
         \centering
         \includegraphics[width=\textwidth]{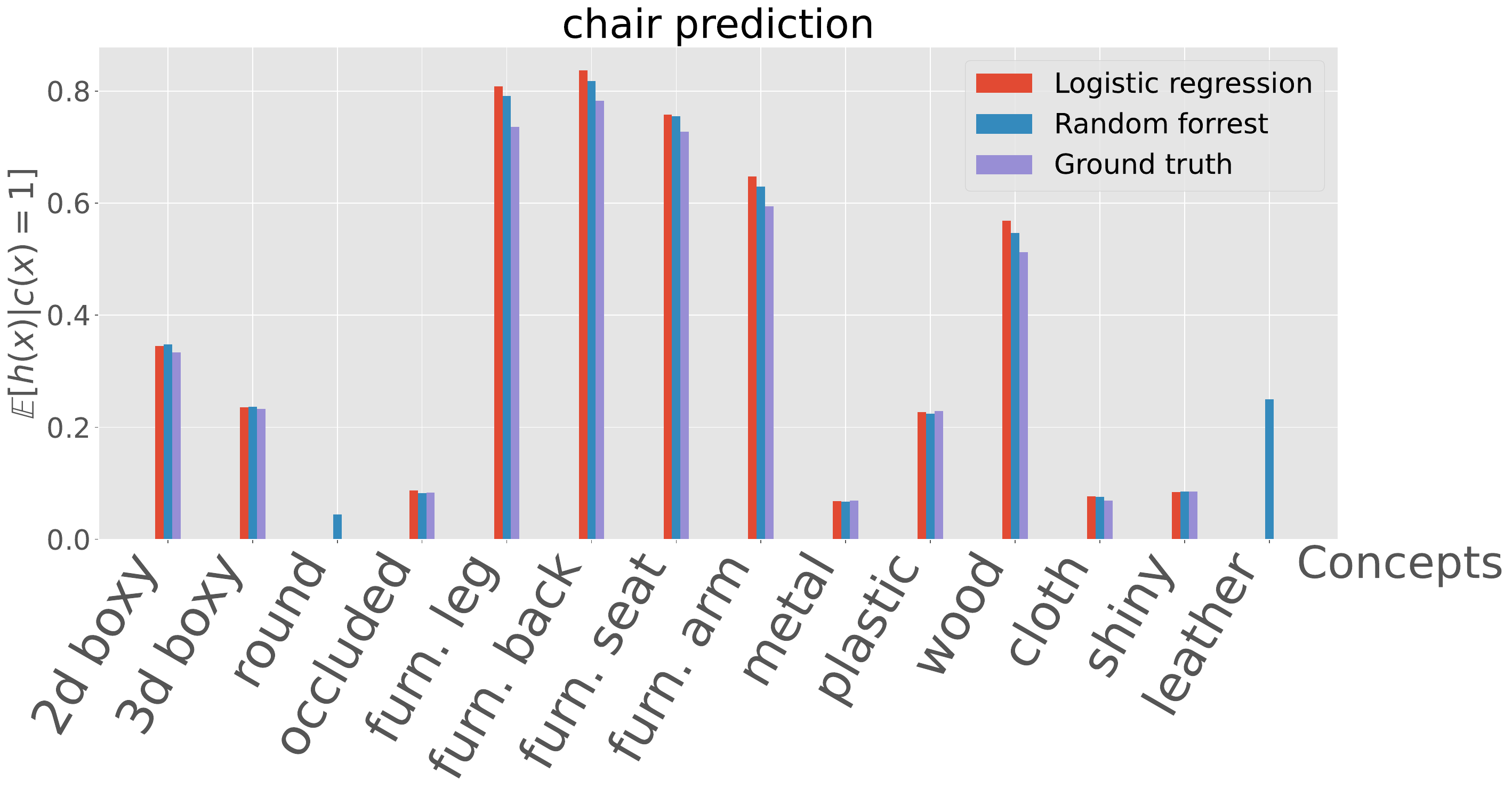}
     \end{subfigure}

    \caption{Logistic regression versus random forest. In both figures, the $x$-axis represents the concepts with positive measures, and the $y$-axis corresponds to a specific measure at interest. \textbf{Left}: $\mathbb{E}[c(x)|h(x)=1]$, where $h(x)=1$ predicts class ``cat''. \textbf{Right}: $\mathbb{E}[h(x)|c(x)=1]$, where $h(x)=1$ predicts class ``chair''.}
    \label{fig:lr_vs_rf}
\end{figure}

\subsubsection{SGD versus AdamW}
Our proposed method also provides a convenient way to study the behavior of different optimizers on the semantic level. The conventional wisdom has been that SGD finds a minimum that generalizes better while ADAM finds a local minimum faster \citep{zhou2020towards}, we are able to demonstrate what semantic concepts in the images are picked up by each optimizer, which can potentially lead to an explanation to the discrepancies in the optimizers' behaviors in addition to generalization ability.

We use ResNet-18 pretrained on ImageNet, and finetune with the following optimizers (unspecified hyperparameters are set to PyTorch defaults):
\begin{enumerate*}[(a)]
	\item SGD, last linear layer learning rate $1e-3$, other layer learning rate $1e-9$, momentum $0.9$, weight decay $5e-4$.
	\item AdamW, last linear layer learning rate $1e-3$, other layer learning rate $1e-9$.
\end{enumerate*}
\
Both models are trained $100$ epochs, the SGD model achieves a test accuracy of $58.46\%$ and the AdamW model achieves $60.26\%$.
Based on Figure~\ref{fig:sgd_vs_adam}, AdamW exhibits a tendency to overemphasize the motorbike class by giving excessive importance to irrelevant features such as "leg," while assigning insufficient weight to relevant features like "engine." This is in contrast to both SGD and the ground truth. This observation suggests that SGD is more suitable optimizer for this task.


\begin{figure*}
     \centering
     \begin{subfigure}[b]{\textwidth}
         \centering
         \includegraphics[width=\textwidth]{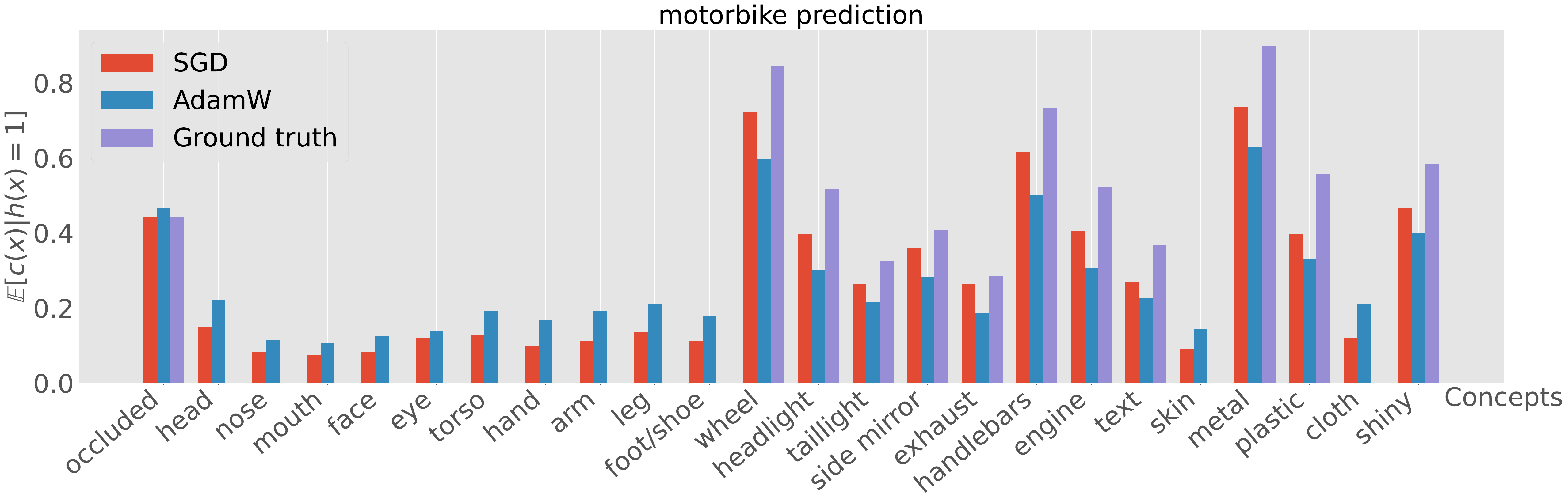}
     \end{subfigure}
     \hfill

    \caption{SGD versus AdamW with $\mathbb{E}[c(x)|h(x)=1]$, where $h(x)$ predicts ``motorbike'' class.}
    \label{fig:sgd_vs_adam}
\end{figure*}

\subsubsection{Prompt Editing}
This section demonstrates how our new approach can enhance model performance through prompt editing.
The recent development of vision-language models like CLIP \citep{radford2021learning} enables exceptional zero-shot classification performance via prompt engineering. These models have an image encoder $f(x): \mathbb{R}^d\to\mathbb{R}^h$ and a text encoder $g(t):\mathbb{R}^p\to\mathbb{R}^h$. These encoders are trained contrastively to align the matching $f(x)$ and $g(t)$, where the text $t$ is the true caption of the image $x$. During inference time, given a set of  $\mathcal{Z}$ classes, for each $z\in\mathcal{Z}$ we can construct a class prompt ``\texttt{a photo of} \{$t_z$\}'', and for an input image $x$, we predict:
$
	\hat z = \argmax_{z\in\mathcal Z} 	 f(x) \cdot g(t_z) .
$
We use the CLIP model with ViT-B16 \citep{dosovitskiy2020image} vision backbone on the a-Pascal dataset in this experiment, achieving a zero-shot test accuracy of $78.2\%$ and an F1-score of $0.6796$. However, we identify $10$ class prompts that mistakenly excessive emphasize on $13$ irrelevant concepts. For example, in Figure~\ref{fig:prompt_edit}, when conditioning on the original CLIP prompts to predict ``bottle'', there are many predicted images containing irrelevant concepts like ``head'', ``ear'', and ``torso''.

To counter the false emphasize, we edit the original class prompts of these $10$ classes by subtracting the irrelevant concepts. In particular, considering an irrelevant concept $w$ for class $z$, we use $g(t_z)-\lambda\cdot g(w)$ as the new prompt embedding for classifying class $z$, where $\lambda<1$ is a scalar that prevents us from over-subtracting. Heuristically we pick $\lambda=0.1$ based on trial-and-error using the training data, however, one can certainly learn $\lambda$ in a few-shot fashion. See more details in the appendix.  The edited prompts improve the F1-score to $0.7042$. Moreover, we can clearly see that the edited prompts align more with human intuition. \Cref{fig:prompt_edit} 
shows that irrelevant concepts have a smaller impact on edited prompts' predictions than on CLIP's predictions.

\begin{figure*}
     \centering
     \begin{subfigure}[b]{\textwidth}
         \centering
         \includegraphics[width=1.0\textwidth]{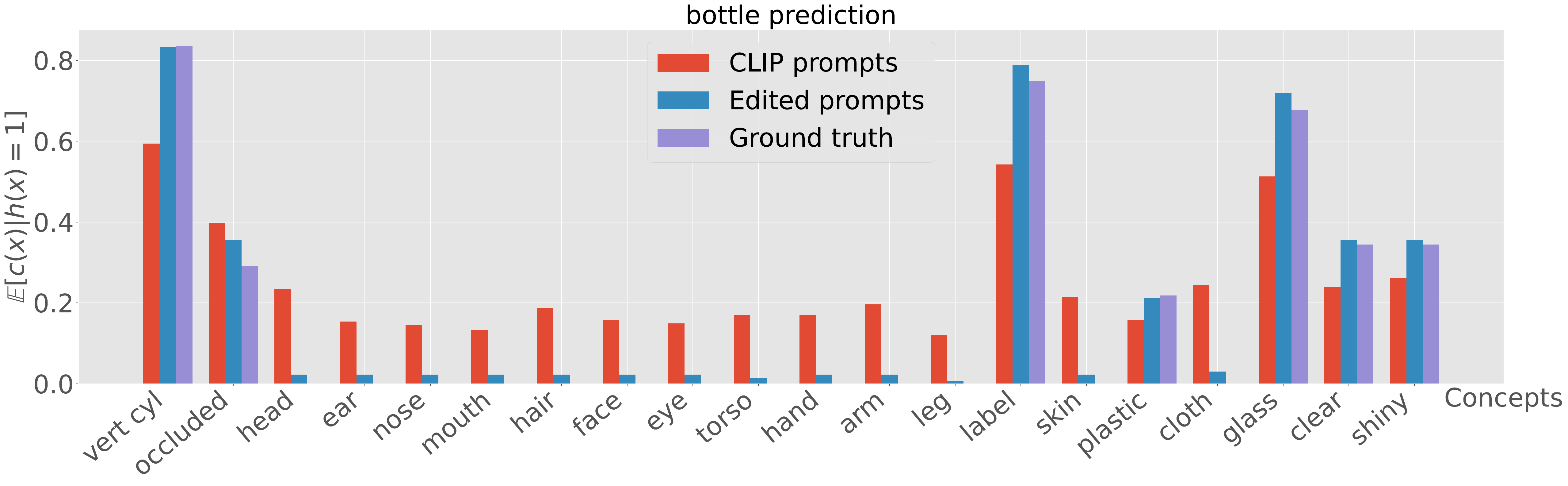}
     \end{subfigure}

    \caption{Prompt editing with $\mathbb{E}[c(x)|h(x)=1]$ where $h(x)=1$ predicts class ``bottle''.}
    \label{fig:prompt_edit}
\end{figure*}

\subsection{Automatic Concept Labeling with Image Captioner}\label{sec:automatic_concept_labeling}
Here we show that the state-of-the-art image captioners can reliably label the concepts in images. We use BLIP-2 \citep{li2023blip} with the pretrained FlanT5XL backbone \citep{chung2022scaling} with the prompt ``\texttt{Question: Does the \{classname\} in the picture have \{concept\}? Short Answer:}''. The maximum generation length is set to $1$. This configuration almost always only returns either ``yes'' or ``no''. To reduce the variance in answers, we ask ChatGPT for $10$ synonyms to the above-mentioned prompt. In total, we gather $11$ questions for each of the $64$ concepts. The final decisions are made based on thresholding the number of ``yes'' answers. Following \citet{bai2022concept}, we focus on the accuracy and recall metrics for concept retrieval. Recall@k means if the number of ``yes'' exceeds $k$, we label the concept as positive and compute the recall correspondingly. The same applies to acc@k. We try several different values of the thresholds and present the result in Table~\ref{table:acc_recall}.  


\begin{table}[!h]
\centering	
\caption{Concept labeling accuracy and recall at different thresholds, the model is FlanT5XL.}
\begin{tabular}{ c|c|c|c|c|c } 
\toprule
Acc@1 & Acc@3 & Acc@5 & Recall@1 & Recall@3 & Recall@5 \\
\midrule
 0.5120 & 0.7366 & 0.8071 &  0.9483 & 0.8520& 0.7672\\
\bottomrule
\end{tabular}
\label{table:acc_recall}
\end{table}

\section{Conclusion}\label{sec:conclusions}


In the paper, we proposed a novel axiomatic approach for concept explanations and developed an efficient algorithm to estimate it. Moreover, we investigated the connection between our approach and earlier techniques for concept explanation.  Subsequently, we harnessed the power of our newly developed algorithm across a range of scenarios, showcasing its many advantages:  it provides guidance to model selection, aids in choosing the optimizer for training neural networks, and, through prompt editing, enhances models' performance.


There are several directions for further research. Firstly, we suggest investigating concept learning and discovery to expand our understanding of how concepts are represented and identified. Given that there are potentially infinite concepts, efficiently learning a concise set of relevant concepts is essential for ensuring interpretability without human intervention.
Secondly, it would be interesting to explore the semantic meaning of different similarity functions to better understand how they influence concept explanations. 
Finally, we propose examining the use of concept explanation for transfer learning, where the method may help improve generalization performance and reduce overfitting.
%

\section{Acknowledgement}
Zhili Feng is supported by funding from the Bosch Center for Artificial Intelligence.

\bibliographystyle{plainnat}
\bibliography{bib}

\appendix
\section{Proofs for Section~\ref{sec:axiomatic_approach}}


\AxiomTwoClm*
\begin{proof}
From the first axiom we know that $M$ can be written as 
\[ 
M(h, c, p) = \sum_x m(h(x), c(x), p(x)),
\]
for some $m$. For any example $x_0$, the recursivity axiom tells us that 
\[
m(h(x_0),c(x_0), p_1+p_2) + \sum_{x\neq x_0} m(h(x),c(x), p(x)) \]
is equal to 
\[m(h(x_0),c(x_0), p_1) + m(h(x_0),c(x_0),p_2) + \sum_{x\neq x_0} m(h(x),c(x), p(x))
\]

So overall, when focusing on $m$ as a function of $p$,  the recursivity axiom implies that 
\begin{equation}\label{eq:additivity_of_m}
  m(\cdot, \cdot, p_1+p_2) = m(\cdot, \cdot, p_1)+m(\cdot, \cdot, p_2)  
\end{equation}
It is a well-known fact that the last equation implies that $m$ is linear with respect to the probability $p(x)$ over the rational numbers. For the sake of completeness, this fact is proved in the next claim. Thus, $m$ can be written as $$m(p(x), c(x), h(x))=p(x) \cdot s(c(x), h(x)).$$
\end{proof}

\begin{claim}
$m$ is linear with respect to $p$. 
\end{claim}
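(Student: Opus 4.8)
The claim to be proved is that the functional equation $m(\cdot,\cdot,p_1+p_2)=m(\cdot,\cdot,p_1)+m(\cdot,\cdot,p_2)$, which holds for all rationals $p_1,p_2\ge 0$ with $p_1+p_2\le 1$, forces $m$ to be linear in its third argument over the rationals. Since the first two arguments $h(x),c(x)$ are held fixed throughout, the plan is to suppress them and treat $m$ as a function $\phi:\mathbb{Q}\cap[0,1]\to\mathbb{R}$ satisfying the additive Cauchy relation $\phi(p_1+p_2)=\phi(p_1)+\phi(p_2)$, and show $\phi(p)=p\cdot\phi(1)$, whereupon we set $s(h(x),c(x))=\phi(1)$.

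The argument is the standard elementary derivation of the Cauchy functional equation restricted to the rationals, which needs no continuity or regularity assumption precisely because we only claim linearity over $\mathbb{Q}$. First I would establish $\phi(0)=0$ by taking $p_1=p_2=0$, which gives $\phi(0)=2\phi(0)$. Next, by a straightforward induction using additivity, I would show $\phi(n p)=n\,\phi(p)$ for every positive integer $n$ and every rational $p$ with $n p\le 1$; the base case is trivial and the inductive step is just one application of the defining equation. Applying this with $p=1/q$ for a positive integer $q$ yields $\phi(1)=q\,\phi(1/q)$, hence $\phi(1/q)=\phi(1)/q$. Combining the two facts, for any rational $r=a/q\in[0,1]$ with $a,q$ positive integers we get $\phi(a/q)=a\,\phi(1/q)=(a/q)\,\phi(1)=r\,\phi(1)$, which is the desired linearity.

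Restoring the suppressed arguments, this shows $m(h(x),c(x),p(x))=p(x)\cdot s(h(x),c(x))$ with $s(h(x),c(x)):=m(h(x),c(x),1)$, matching the form asserted in \Cref{clm:axiom2}. The only mild subtlety to keep track of is the domain constraint $p\le 1$: since probabilities lie in $[0,1]$, the identity $\phi(np)=n\phi(p)$ is only invoked when $np\le 1$, and the values $p=1/q$ and $r=a/q\le 1$ all respect this, so no step ever evaluates $\phi$ outside its domain. I do not anticipate a genuine obstacle here — the result is classical — but the one point requiring care is to avoid implicitly assuming continuity and instead obtain linearity purely combinatorially over $\mathbb{Q}$, which is exactly why the claim is stated for rational-valued $p$.
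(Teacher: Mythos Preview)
Your proposal is correct and follows essentially the same approach as the paper: both reduce to the Cauchy functional equation over the rationals and derive homogeneity first for integers, then for reciprocals of integers, and finally for arbitrary rationals. Your version is slightly more careful in tracking the domain constraint $p\le 1$ and in explicitly identifying $s(h(x),c(x))=m(h(x),c(x),1)$, but the underlying argument is identical.
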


\begin{proof}
Recall that a linear function f(x) is a function that satisfies two properties: (1) additivity: $f(x + y) = f(x) + f(y)$ and
(2) $f(\alpha x) = \alpha f(x)$ for all $\alpha$. The first property is Equation~\ref{eq:additivity_of_m}. Thus, we need to prove that this equation also imply property (2), when $\alpha$ is rational. We will show it in three steps, first when $\alpha$ is integer, then when $1/\alpha$ is integer, and finally when $\alpha$ is rational, i.e., $\alpha=a/b$ when $a$ and $b$ are integers. 

For integer $a$, and probability $p$, it holds that \[m(ap) = m(p + p + \ldots +p) = am(p).\] For integer $1/a$, it holds that 
\[
m(p)=m(\frac1a \cdot a p )=\frac1a \cdot m(ap),
\]
which implies that, \[am(p) = m(ap).\]
For rational $a/b$ it holds that 
\[
m(a/b \cdot p ) = m(a \cdot 1/b \cdot p)  = am(1/b\cdot p) = \frac ab m(p).
\]

\end{proof}


\section{Proofs for Section~\ref{sec:comparison}}\label{apx:proof_comparison}

For a model $h$, the completeness score \cite{yeh2020completeness} is defined as 
\[
\frac{\sup_g \Pr_{x,y} [y= \argmax_{y'} h_{y'}(g(\mathcal{C}(x)) ] - a_r }{ \Pr_{x,y} [y = \argmax_{y'} f_{y'}(x)]  -a_r}, 
\]
where $\mathcal{C}(x)$ are the concepts extracted from the input instance $x$, $a_r$ is the accuracy of a random prediction to equate the lower bound
of completeness score to 0, and $f$ is the neural network to be explained. 
In our setting, the denominator is equal to 1. Thus, up to normalization, the completeness score is equal to 
\[
\sup_g \Pr_{x,y} [y= \argmax_{y'} h_{y'}(g(\mathcal{C}(x)) ].
\] In this paper we focus on measures of one concept, hence $C(x) = c(x)$. And recall that we denoted 
\begin{equation}
S_{CACBE}\triangleq\sup_g \Pr_{x,y}[y = \argmax_{y'} h_{y'}(g(c(x))],
\end{equation}

Now we are ready to prove the Theorems in Section~\ref{sec:comparison}.

\CompareAware*
\begin{proof}
Focus on the LHS of \Cref{clm:aware_equiv}. 
Partition all examples into four parts, depending on their prediction $h(x)$ and their concept $c(x)$. Let $g:\{-1, +1\}\to\mathcal{X}$, and recall that at the beginning of \Cref{sec:comparison} we set $y\triangleq h(x)$. Then, the LHS is equal to 
\begin{align*}
& \Pr_x(1=h(g(1))| c(x) = 1 \wedge h(x) =1]) \Pr(c(x) = 1 \wedge h(x) =1)\\
+ & \Pr_x(-1=h(g(1))| c(x) = 1 \wedge h(x) = -1]) \Pr(c(x) = 1 \wedge h(x) =-1)\\
+ & \Pr_x(1=h(g(-1))| c(x) = -1 \wedge h(x) =1]) \Pr(c(x) = -1 \wedge h(x) =1)\\
+ & \Pr_x(-1=h(g(-1))| c(x) = -1 \wedge h(x) = -1]) \Pr(c(x) = -1 \wedge h(x) =-1)
\end{align*}
To maximize the first two terms, $h(g(1))$ should be $1$ if $$\Pr(c(x) = 1 \wedge h(x) =1)>\Pr(c(x) = 1 \wedge h(x) =-1)$$ and otherwise $h(g(1)) = -1$. Similar argument applies to the last two terms. 
Hence, the LHS in Equation~\ref{clm:aware_equiv} is equal to 
\begin{equation}
  \begin{aligned}\label{eq:clm_proof_1}
 & \max(\Pr(c(x) = 1 \wedge h(x) =1), \Pr(c(x) = 1 \wedge h(x) = -1))\\ & +  \max(\Pr(c(x) = -1 \wedge h(x) =-1), \Pr(c(x) = -1 \wedge h(x) =-1)) 
\end{aligned}
\end{equation}

We can rewrite the above probabilities, for $y = -1, +1$ 
\begin{align*}
 &   \Pr(h(x) = +1 \wedge c(x) = y) = \mathbb{E}_x\left[\frac{h(x)+1}{2}\; |\;c(x)=y\right]\cdot\Pr(c(x)=y)
    \\
& \Pr(h(x) = -1 \wedge c(x) = y) = \mathbb{E}_x\left[\frac{-h(x)+1}{2}\;|\;c(x)=y\right]\cdot\Pr(c(x)=y)
\end{align*}

In each term of
 Equation~\ref{eq:clm_proof_1}, taking the max translates to taking an absolute value, as can be seen by the following equations 
 \begin{align*}
 &\max \Big(\mathbb{E}_x\left[\frac{h(x)+1}{2}\; |\;c(x)=y\right]\cdot\Pr(c(x)=y), \mathbb{E}_x\left[\frac{-h(x)+1}{2}\;|\;c(x)=y\right]\cdot\Pr(c(x)=y)\Big)\\ 
 & =  \frac12 \Big(\max(\mathbb{E}_x\left[h(x)\; |\;c(x)=y\right], -\mathbb{E}_x\left[h(x)\;|\;c(x)=y\right])+1\Big) \cdot\Pr(c(x)=y)\\
  & =  \frac12(\rvert\mathbb{E}_{x}[h(x)|c(x)=y]\lvert+1)\cdot \Pr(c(x)=y),
 \end{align*}
which is equal to a single term in the sum on the RHS of Equation~\ref{clm:aware_equiv}, thus proving the claim. 
\end{proof}

\CompareTCAV*
\begin{proof}
Before stating the full proof, here are the 
main ideas. The proof begins with estimating the class-conditioned measure using the formula $\frac{\sum_{x\in X_k} c(x)}{|X_k|}$, which is closely related to the TCAV definition. Then we need to show that $c(x)$ is similar to $S(x)$ when $x$ is labeled as $1$. Such examples satisfy that $x$ and $w_h$ are close to each other. This intuitively holds because $c(x)=g(x)\cdot v$ measures the closeness of $x$ and $v$, while $S(x)=w_h\cdot v$ measures the closeness of $w_h$ and $v$. Since $x$ and $w_h$ are close, so are $S(x)$ and $c(x)$, which proves the theorem. To prove the theorem formally we apply Cauchy–Schwarz inequality, triangle inequality, and Hoeffding’s inequality.

Now we countinue with the formal proof. 
Since $v$ and all the embeddings are normalized to have a unit norm, the Cauchy-Schwarz inequality implies that the value of $c(x)$ falls within the range of $[-1,1]$. Therefore, we can use Hoeffding's inequality as shown in Fact~\ref{fact:hoeffding}, to prove that the average concept in the class is a reliable estimator,
    \begin{equation}\label{eq:implication_of_hoeffding}
        \left\lvert  \mathbb{E}_{x\sim p}[c(x)| h(x)] - \frac{\sum_{x\in X_k} c(x)}{|X_k|} \right\rvert < \frac\epsilon2. 
    \end{equation}
Therefore, it is sufficient to provide a bound for the following term
\[
\left\lvert \frac{\sum_{x\in X_k} c(x)}{|X_k|} -\textrm{TCAV$_{con}$} \right\rvert = \left\lvert  \frac{\sum_{x\in X_k} g(x)\cdot v}{|X_k|} - \frac{\sum_{x\in X_k} w_h\cdot v}{|X_k|} \right\rvert. 
\]
By applying the triangle inequality, it is satisfactory to establish a bound for each of the inner terms in the sum in order to prove our claim. Specifically, we need to bound the following term to support our assertion.
\[
|g(x)\cdot v - w_h\cdot v| = |(g(x)-w_h)\cdot v|
\]
By Cauchy–Schwarz inequality, the last term is bounded by 
\[
\lVert g(x)- w_h\rVert\cdot \lVert v \rVert = \lVert g(x)- w_h\rVert = \sqrt{\lVert g(x) \rVert^2 + \lVert w_h\rVert^2 -2g(x)\cdot w_h} = \sqrt{2(1-g(x)\cdot w_h)},
\]
where the first equality follows from the fact that $v$ is a unit norm and the last equality follows from the fact that $g(x)$ and $w_h$ are unit norms. Knowing that $x$ belongs to $X_k$, we can deduce that $w_h\cdot g(x) - \theta_h$ is greater than zero. Implying that $g(x)\cdot w_h \geq 1 - \epsilon^2/8,$ which is equivalent to $1-g(x)\cdot w_h \leq \epsilon^2/8.$ This proves that 
\[
|g(x)\cdot v - w_h\cdot v| \leq \frac\epsilon2.
\]
The assertion's statement follows when combining both the last inequality and Inequality~\ref{eq:implication_of_hoeffding}.
\end{proof}

\begin{fact}[Hoeffding's inequality]\label{fact:hoeffding}
For $X_1,\ldots, X_n\in[-1,1]$ independent random variables
\[
\Pr\left(
\left\lvert\frac1n \sum_i \mathbb{E}[X_i] - \frac{1}{n}\sum_i X_i\right\rvert > \epsilon\right) < \exp(-n\epsilon^2/2)
\]
\end{fact}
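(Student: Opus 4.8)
The statement is the classical two-sided Hoeffding inequality, and the plan is to prove it by the exponential-moment (Chernoff) method; the only substantive ingredient is a bound on the moment generating function of a single centered, bounded variable, known as Hoeffding's lemma. First I would center the variables: set $Y_i = X_i - \mathbb{E}[X_i]$, so that each $Y_i$ has mean zero and takes values in an interval $[a_i,b_i]$ of length $b_i - a_i \le 2$ (since $X_i\in[-1,1]$), and let $S=\sum_{i=1}^n Y_i$. It suffices to control the upper tail $\Pr(S>n\epsilon)$, because the lower tail $\Pr(S<-n\epsilon)$ follows by applying the same bound to $-Y_i$, and the two combine by a union bound.

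For the upper tail I would fix $\lambda>0$ and apply Markov's inequality to $e^{\lambda S}$, giving $\Pr(S>n\epsilon)\le e^{-\lambda n\epsilon}\,\mathbb{E}[e^{\lambda S}]$. Independence factorizes the moment generating function, $\mathbb{E}[e^{\lambda S}]=\prod_{i=1}^n \mathbb{E}[e^{\lambda Y_i}]$, at which point Hoeffding's lemma bounds each factor by $\exp(\lambda^2 (b_i-a_i)^2/8)\le \exp(\lambda^2/2)$. Combining yields $\Pr(S>n\epsilon)\le \exp(n\lambda^2/2 - \lambda n\epsilon)$, and minimizing the exponent over $\lambda$ (the optimum is $\lambda=\epsilon$) gives the one-sided bound $\exp(-n\epsilon^2/2)$. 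The two-sided event $\{|S|>n\epsilon\}$ is then at most $2\exp(-n\epsilon^2/2)$ by the union bound; the factor $2$ is the only discrepancy from the stated constant, and is best read as a one-sided-versus-two-sided convention.

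The technical heart, and the step I expect to require the most care, is Hoeffding's lemma: for a mean-zero $Y\in[a,b]$ and any $\lambda$, one has $\mathbb{E}[e^{\lambda Y}]\le \exp(\lambda^2(b-a)^2/8)$. I would prove it by using convexity of $t\mapsto e^{\lambda t}$ to dominate $e^{\lambda Y}$ on $[a,b]$ by the chord joining $(a,e^{\lambda a})$ and $(b,e^{\lambda b})$, taking expectations and using $\mathbb{E}[Y]=0$ to eliminate the linear term, and rewriting the resulting bound as $e^{\phi(u)}$ with $u=\lambda(b-a)$ and $\phi(u)=-pu+\log(1-p+pe^{u})$ for $p=-a/(b-a)$. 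The key computation is that $\phi(0)=\phi'(0)=0$ and $\phi''(u)\le 1/4$ for all $u$ — the latter because $\phi''(u)$ equals $q(1-q)$ for a quantity $q\in(0,1)$, i.e.\ the variance of a two-point distribution, which is at most $1/4$ — so Taylor's theorem gives $\phi(u)\le u^2/8$, exactly the claimed bound. Everything surrounding this lemma is routine: Markov's inequality, factorization by independence, and a one-variable optimization.
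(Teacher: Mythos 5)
The paper contains no proof of this statement to compare against: Fact~\ref{fact:hoeffding} is quoted as a classical result and used as a black box in the proof of Theorem~\ref{clm:compare_tcav}. Your derivation is the standard Chernoff-method proof and is correct in all its steps: centering so that each $Y_i$ lies in an interval of length at most $2$, Markov's inequality applied to $e^{\lambda S}$, factorization of the moment generating function by independence, Hoeffding's lemma via the chord bound with $\phi(u)=-pu+\log(1-p+pe^u)$, the computation $\phi''(u)=q(1-q)\le 1/4$, and optimization at $\lambda=\epsilon$. (One pedantic point: the lemma's two-point argument presumes $p\in(0,1)$, i.e.\ $a<0<b$; in the degenerate cases a mean-zero $Y$ supported in $[a,b]$ with $a\ge 0$ or $b\le 0$ is almost surely $0$ and the bound is trivial.)

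One issue deserves a sharper verdict than you give it. You correctly obtain $2\exp(-n\epsilon^2/2)$ for the two-sided event and describe the missing factor $2$ as a ``one-sided-versus-two-sided convention.'' It is not a convention: the Fact as printed is false. Take $X_1,\ldots,X_n$ i.i.d.\ Rademacher ($\pm 1$ with probability $1/2$, so $\mathbb{E}[X_i]=0$) with $n$ odd; then $\sum_i X_i$ is a nonzero odd integer, so $\lvert\frac1n\sum_i X_i\rvert\ge 1/n$ always, and for any $0<\epsilon<1/n$ the left-hand side equals $1$ while $\exp(-n\epsilon^2/2)<1$. The correct two-sided statement carries the factor $2$ (and ``$\le$'' rather than ``$<$'', since Chernoff yields a non-strict inequality). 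This is a defect of the paper's statement, not of your proof; propagating the corrected bound through the proof of Theorem~\ref{clm:compare_tcav} only perturbs constants there (essentially replacing $\ln(1/\delta)$ by $\ln(2/\delta)$ in the sample-size requirement), so your version is the one that should be adopted.
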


\section{Experiment Details}
\subsection{Prompt editing with few-shot data}
As mentioned in \cref{sec:experiments}, we can leverage few-shot data when editing prompts with the help of the new measures.
We pick $10$ classes of which the measure $\mathbb{E}[c(x)|h(x)=1]$ does not align with the ground truth (which also aligns with human intuition). In particular, ``bicycle'',  ``bird'',  ``bottle'',  ``chair'',  ``diningtable'',  ``horse'',  ``motorbike'',  ``sofa'',  ``train'',  ``tvmonitor'' are the selected classes. In \Cref{fig:prompt_edit_all_classes} we see $\mathbb{E}[c(x)|h(x)=1]$ on all $20$ classes, and among them the above-mentioned $10$ classes are selected for prompt editing. For simplicity, we focus only on a subset of $13$ concepts that we want to modify (specifically here, we only want to subtract them), which include ``head'', ``ear'', ``nose'', ``mouth'', ``hair'', ``face'', ``eye'', ``torso'', ``hand'', ``arg'', ``leg'', ``foot/shoe'', ``skin''. 

To edit the prompt, for any $z$ in the above classes, we first get the normalized\footnote{CLIP always normalizes its embedding to the unit norm.} text embedding $\mathcal{I}_z$ of its prompt ``\texttt{a photo of \{z\}}''. Then for each concept $c\in\mathcal{C}$ that we want to subtract, we get its normalized text embedding $\mathcal{I}_c$. Finally, we use $\mathcal{I}_z-\lambda \cdot \frac{1}{|\mathcal{C}|}\sum_{c\in\mathcal{C}}\mathcal{I}_c$ as the prompt embedding, where $\lambda=0.1$. We see that after editing, the measure $\mathbb{E}[c(x)|h(x)=1]$ aligns with the ground truth more, see \Cref{fig:prompt_edit_all_classes}.

Instead of heuristically setting $\lambda=0.1$ for every class, we can instead use $\mathcal{I}_z- \cdot \lambda_z\frac{1}{|\mathcal{C}|}\sum_{c\in\mathcal{C}}\mathcal{I}_c$, and learn $\lambda_z$. We use $k=1,2,4,8,16$ shot per class for training. We train using AdamW with a weight decay of $1$ for $1000$ epochs. They achieve a superior F1-score compared to the original CLIP prediction (see \Cref{table:fs_scalar_f1}), and also a superior accuracy (see \Cref{table:fs_scalar_acc}) for large enough $k$.

\begin{table}[tb]
\centering	
\caption{F1-score (standard deviation) of the original CLIP prediction and few-shot learned $\lambda_z$}
\resizebox{\textwidth}{!}{
\begin{tabular}{ c|c|c|c|c|c } 
\toprule
 Original & $k=1$ & $k=2$ & $k=4$ & $k=8$ & $k=16$ \\
\midrule
 0.6796 & 0.7001 (0.0073) & 0.6903 (0.0089) & 0.6996 (0.0037) & 0.6991 (0.0044) & 0.7003 (0.0027) \\ 
\bottomrule
\end{tabular}
}
\label{table:fs_scalar_f1}
\end{table}

\begin{table}[tb]
\centering	
\caption{Accuracy (standard deviation) of the original CLIP prediction and few-shot learned $\lambda_z$}
\resizebox{\textwidth}{!}{
\begin{tabular}{ c|c|c|c|c|c } 
\toprule
 Original & $k=1$ & $k=2$ & $k=4$ & $k=8$ & $k=16$ \\
\midrule
0.7820 & 0.7637 (0.0138) & 0.7795 (0.0047) & 0.7762 (0.0047) & 0.7848 (0.0024) & 0.7844 (0.0025) \\ 
\bottomrule
\end{tabular}
}
\label{table:fs_scalar_acc}
\end{table}

\begin{figure}
     \centering
     \begin{subfigure}[b]{\textwidth}
         \centering
         \includegraphics[width=1.0\textwidth]{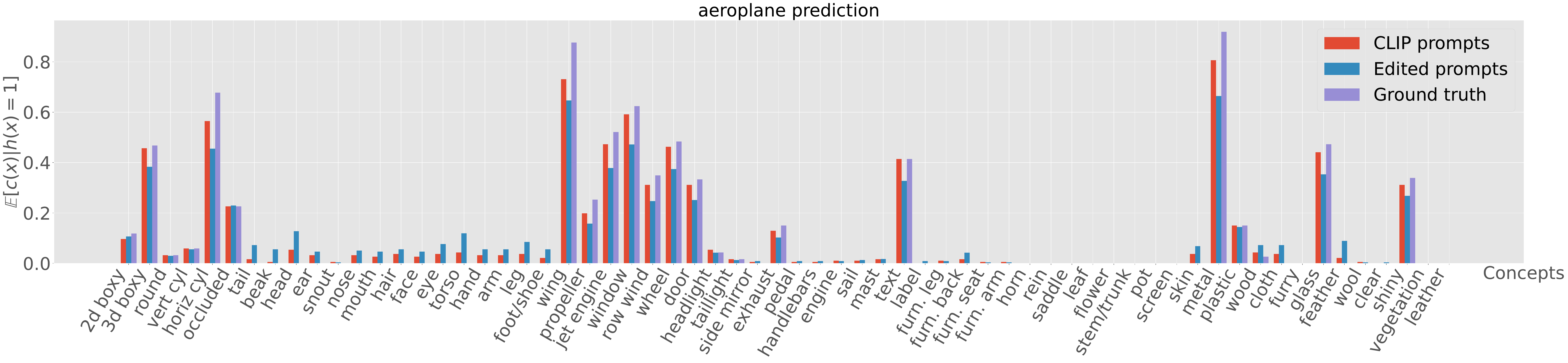}
     \end{subfigure}
    \begin{subfigure}[b]{\textwidth}
         \centering
         \includegraphics[width=1.0\textwidth]{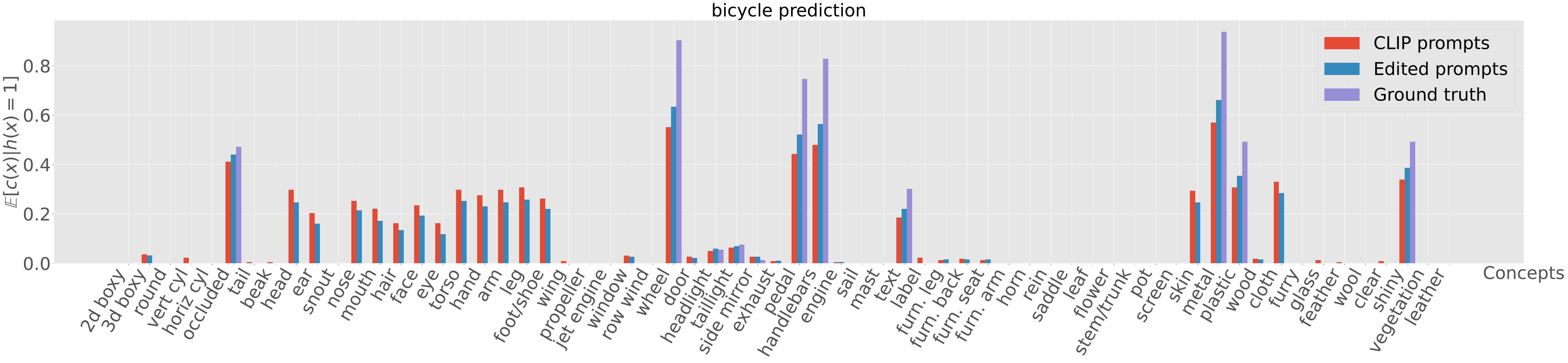}
     \end{subfigure}
     \begin{subfigure}[b]{\textwidth}
         \centering
         \includegraphics[width=1.0\textwidth]{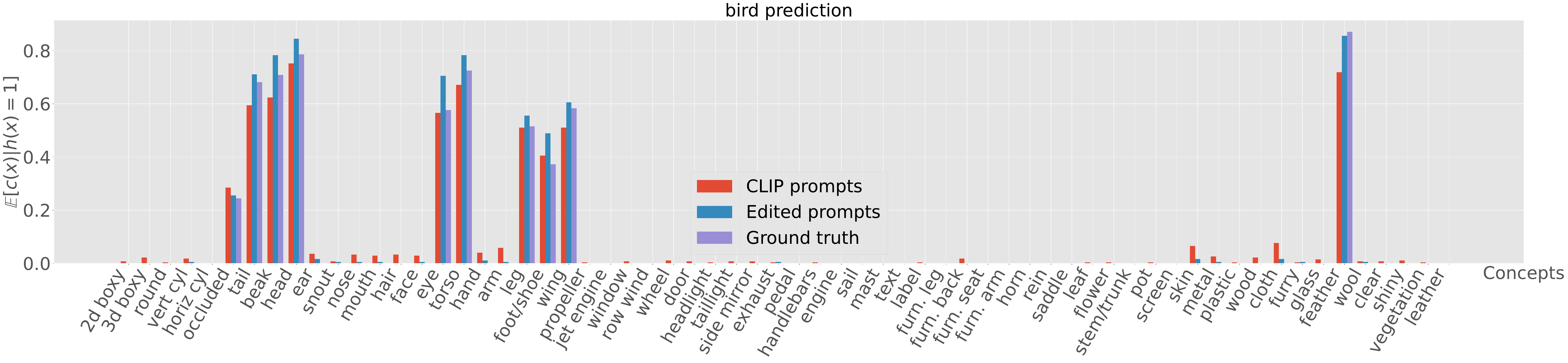}
     \end{subfigure}
     \begin{subfigure}[b]{\textwidth}
         \centering
         \includegraphics[width=1.0\textwidth]{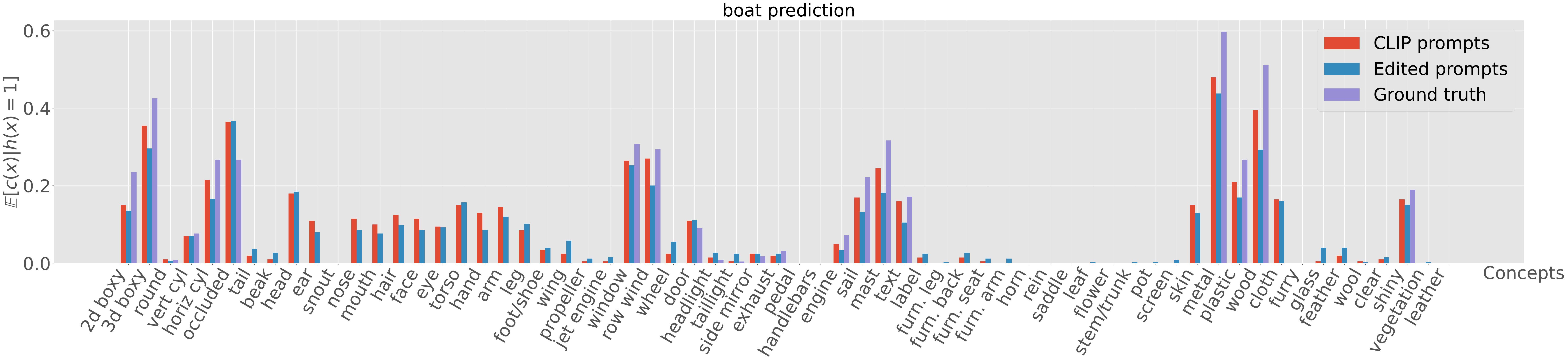}
     \end{subfigure}
     \begin{subfigure}[b]{\textwidth}
         \centering
         \includegraphics[width=1.0\textwidth]{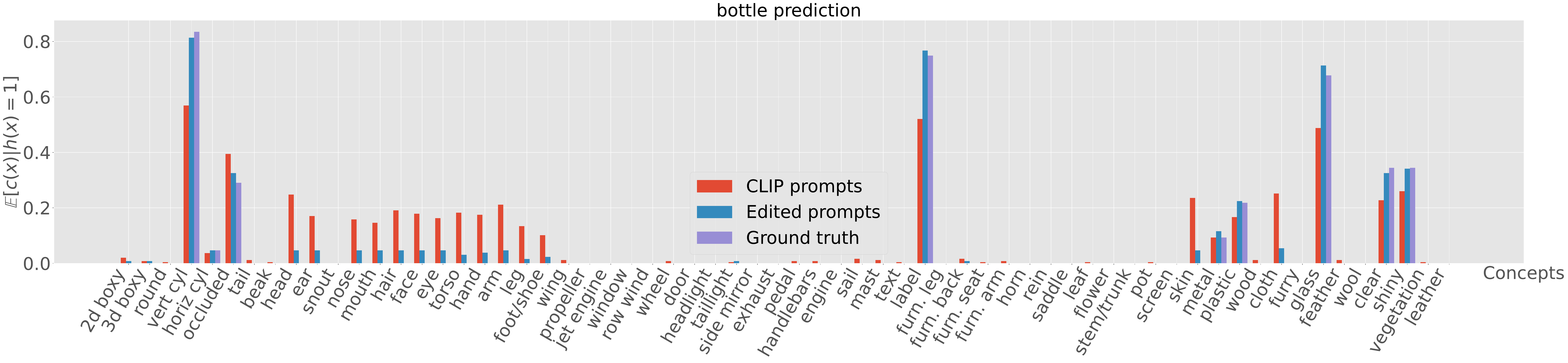}
     \end{subfigure}
 \end{figure}
 
 \begin{figure}
     \begin{subfigure}[b]{\textwidth}
         \centering
         \includegraphics[width=1.0\textwidth]{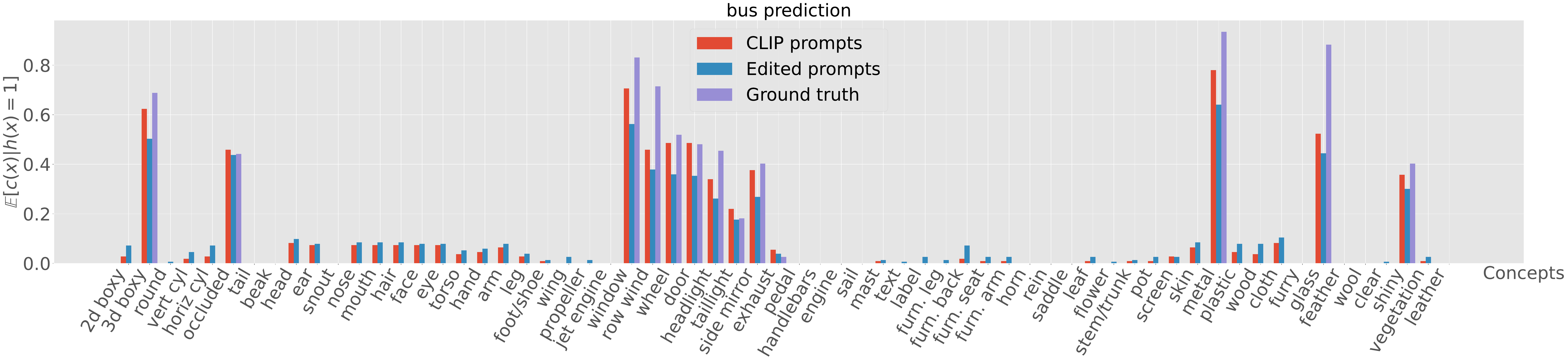}
     \end{subfigure}
     \begin{subfigure}[b]{\textwidth}
         \centering
         \includegraphics[width=1.0\textwidth]{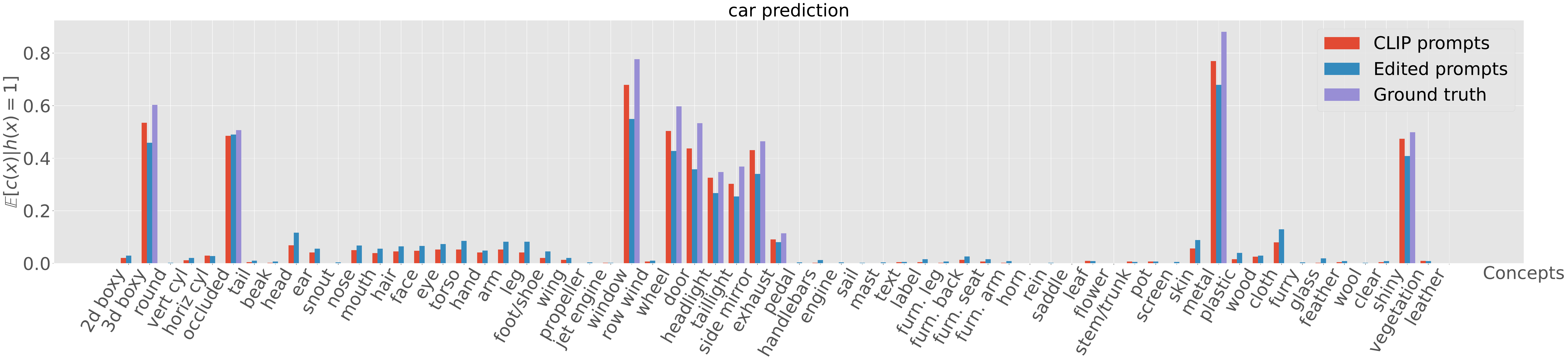}
     \end{subfigure}
     \begin{subfigure}[b]{\textwidth}
         \centering
         \includegraphics[width=1.0\textwidth]{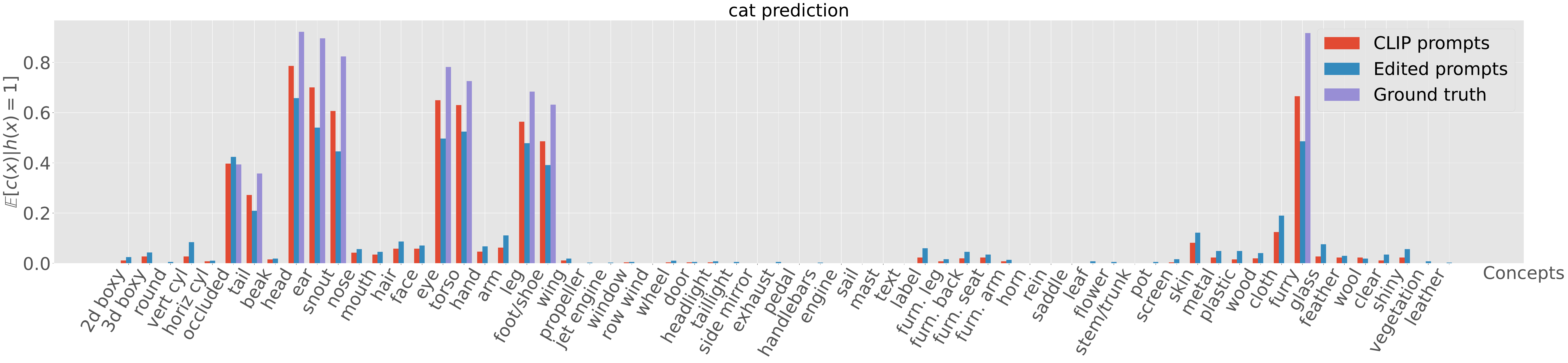}
     \end{subfigure}
     \begin{subfigure}[b]{\textwidth}
         \centering
         \includegraphics[width=1.0\textwidth]{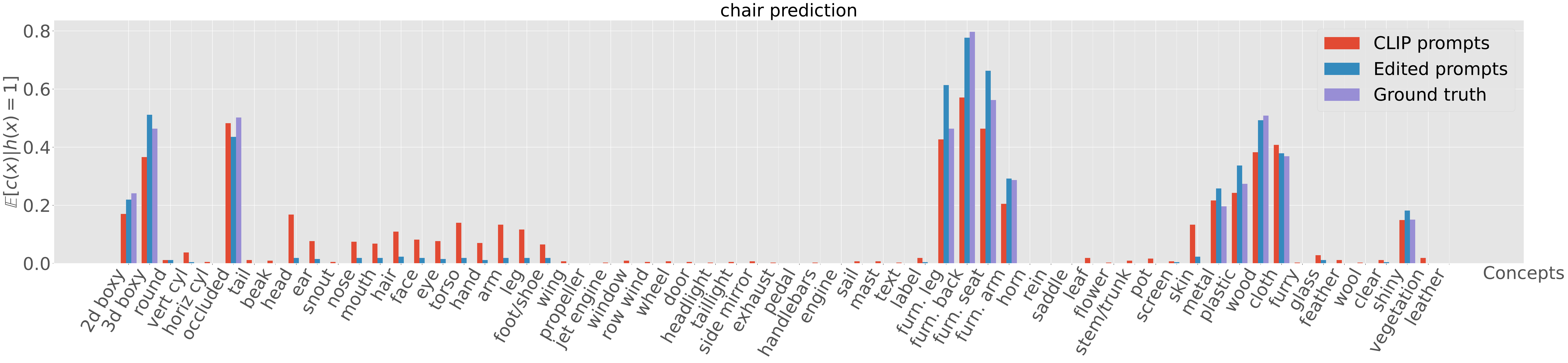}
     \end{subfigure}
     \begin{subfigure}[b]{\textwidth}
         \centering
         \includegraphics[width=1.0\textwidth]{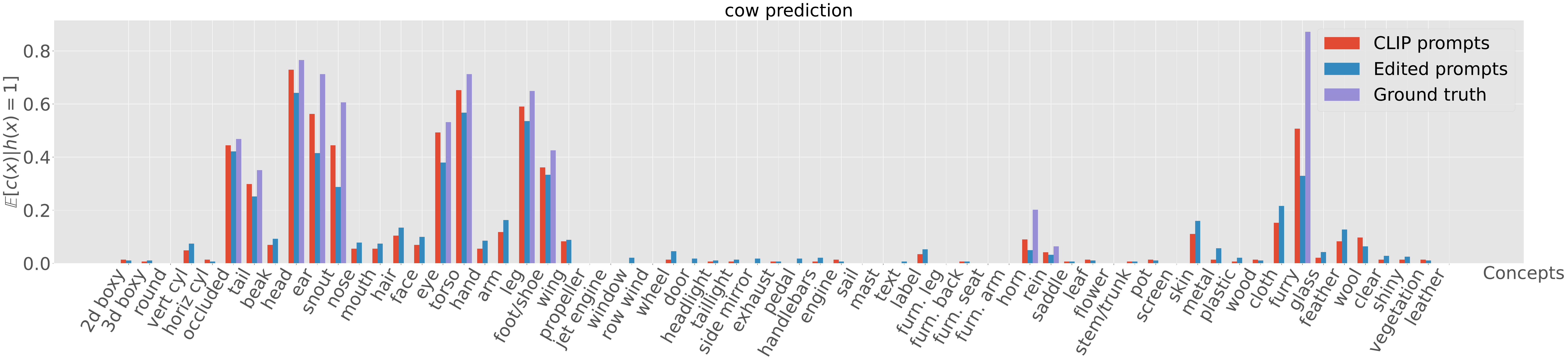}
     \end{subfigure}
\end{figure}

 \begin{figure}
     \begin{subfigure}[b]{\textwidth}
         \centering
         \includegraphics[width=1.0\textwidth]{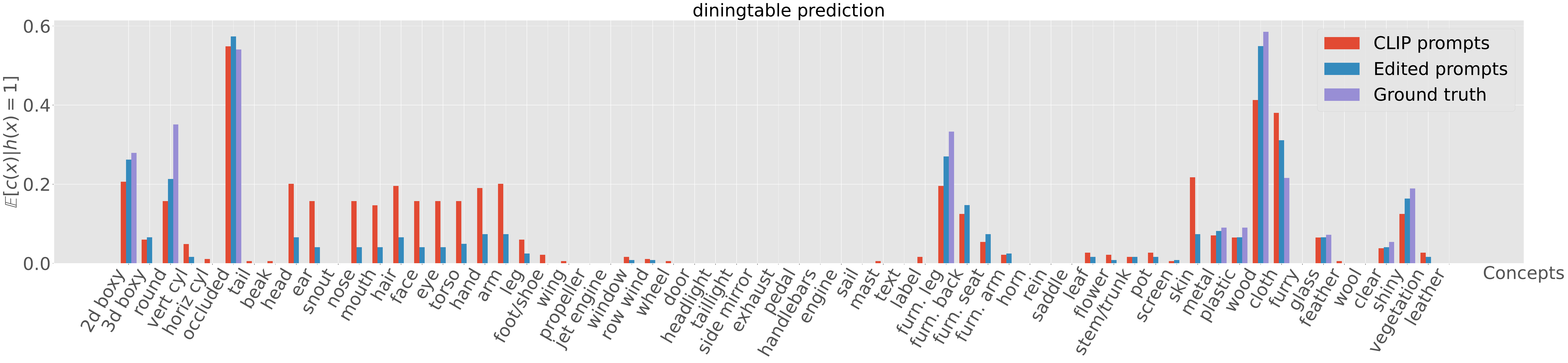}
     \end{subfigure}
     \begin{subfigure}[b]{\textwidth}
         \centering
         \includegraphics[width=1.0\textwidth]{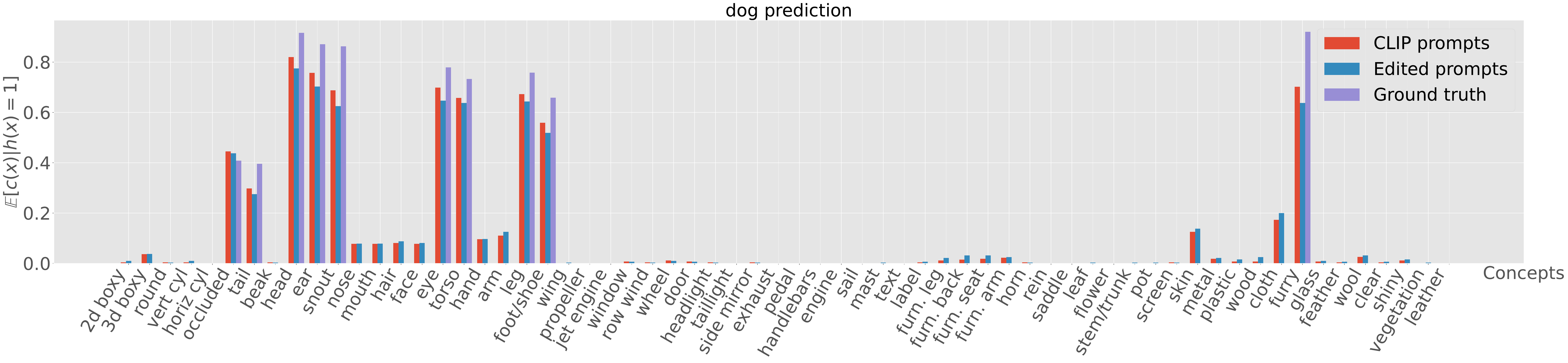}
     \end{subfigure}
     \begin{subfigure}[b]{\textwidth}
         \centering
         \includegraphics[width=1.0\textwidth]{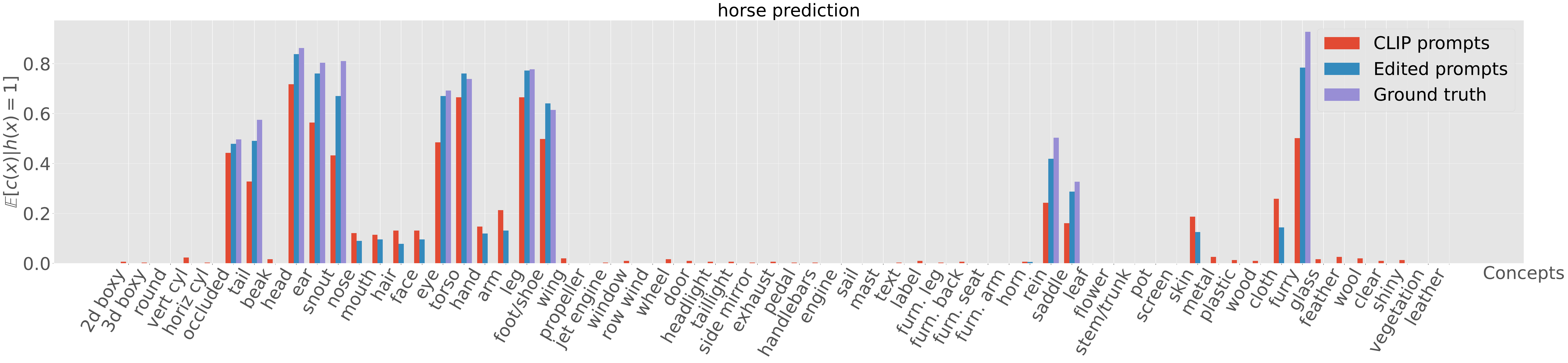}
     \end{subfigure}
     \begin{subfigure}[b]{\textwidth}
         \centering
         \includegraphics[width=1.0\textwidth]{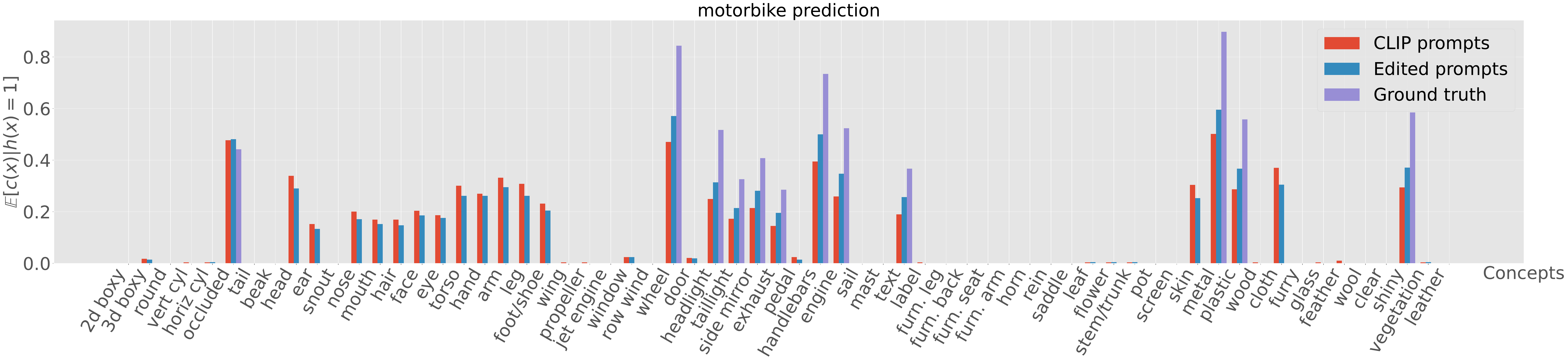}
     \end{subfigure}
     \begin{subfigure}[b]{\textwidth}
         \centering
         \includegraphics[width=1.0\textwidth]{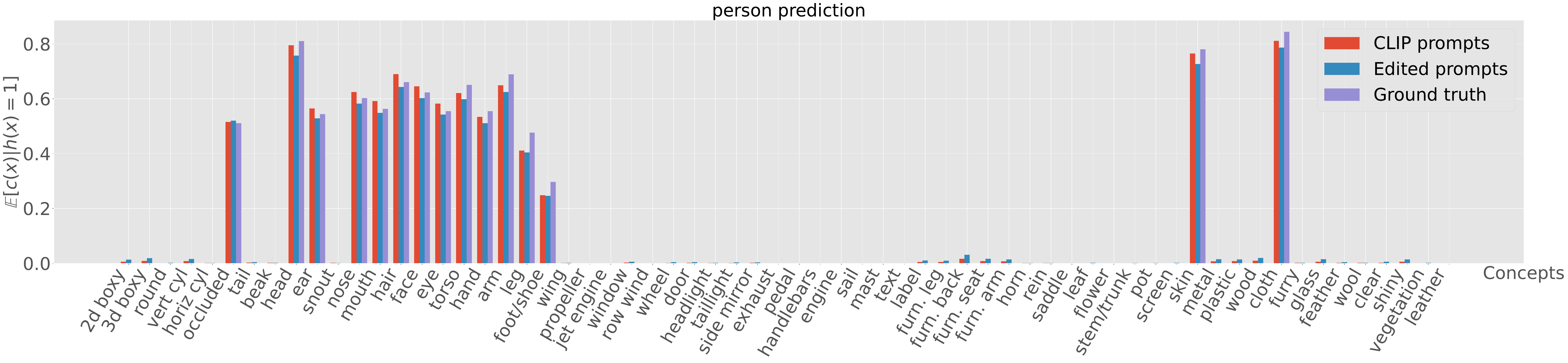}
     \end{subfigure}
\end{figure}

\begin{figure}
     \begin{subfigure}[b]{\textwidth}
         \centering
         \includegraphics[width=1.0\textwidth]{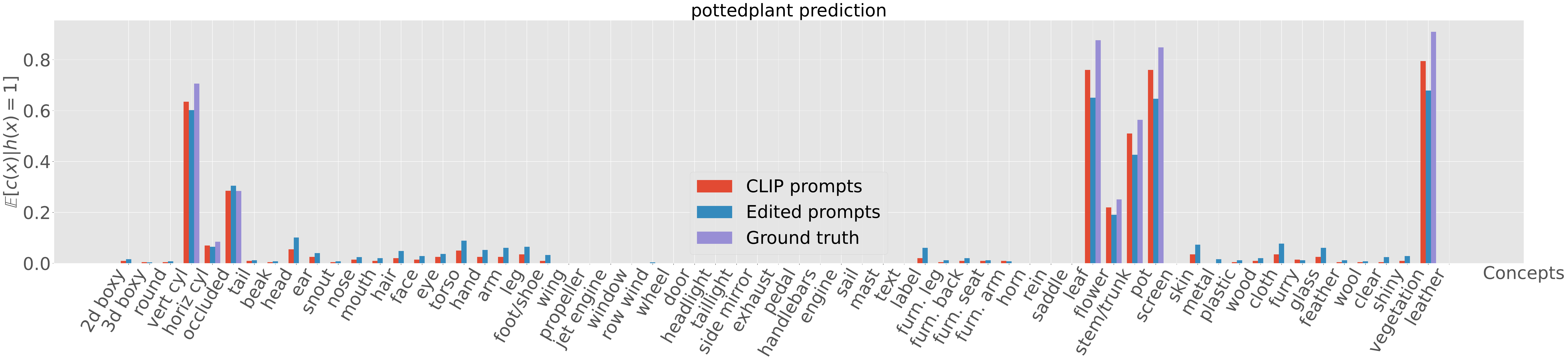}
     \end{subfigure}
     \begin{subfigure}[b]{\textwidth}
         \centering
         \includegraphics[width=1.0\textwidth]{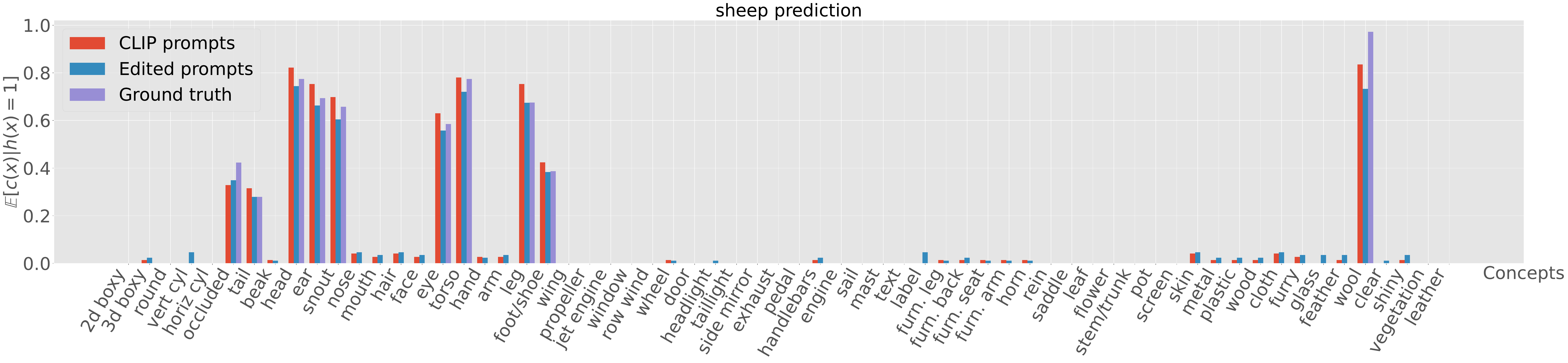}
     \end{subfigure}
     \begin{subfigure}[b]{\textwidth}
         \centering
         \includegraphics[width=1.0\textwidth]{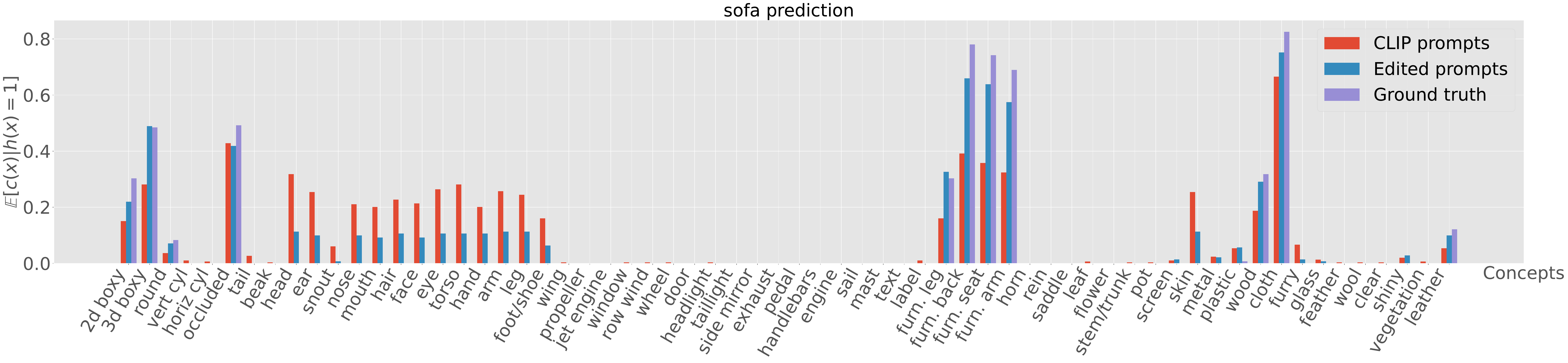}
     \end{subfigure}
     \begin{subfigure}[b]{\textwidth}
         \centering
         \includegraphics[width=1.0\textwidth]{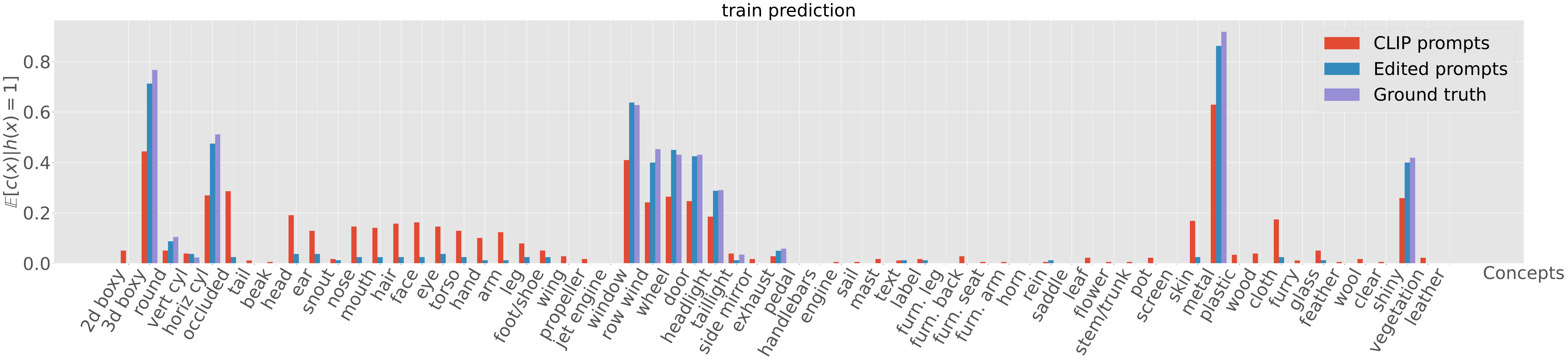}
     \end{subfigure}
     \begin{subfigure}[b]{\textwidth}
         \centering
         \includegraphics[width=1.0\textwidth]{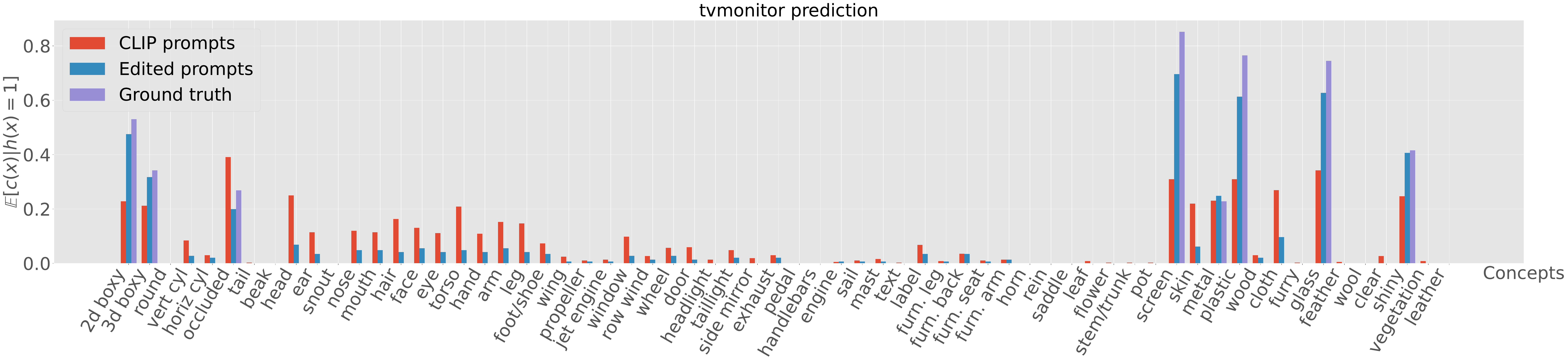}
     \end{subfigure}
    \caption{$\mathbb{E}[c(x)|h(x)=1]$ with original CLIP prompts, edited prompts, and ground truth.}
    \label{fig:prompt_edit_all_classes}
\end{figure}

\end{document}